\documentclass[a4paper,12pt]{article}
\usepackage{geometry}
\geometry{margin=1in}
\usepackage[font=small,format=plain,labelfont=bf,up,textfont=it,up]{caption}
\usepackage[utf8]{inputenc}
\usepackage{times}
\usepackage{subfigure} 
\usepackage{graphicx}
\usepackage{microtype}
\usepackage{listings}
\usepackage{amsmath}
\usepackage{amsfonts}
\usepackage{amsthm}
\usepackage{amssymb}

\usepackage{natbib}

\usepackage[page]{appendix}

\usepackage[english]{babel}
\usepackage{fancyhdr}

\newtheorem{theorem}{Theorem}
\newtheorem{lemma}{Lemma}
\newtheorem{corollary}{Corollary}
\theoremstyle{definition}
\newtheorem{definition}{Definition}
\newtheorem{proposition}{Proposition}

\parskip 1ex
\parindent 0cm

\newcommand{\rr}{\mathbf{r}}

\newcommand{\D}{\Delta}

\newcommand{\V}[1]{\text{Var($#1$)}}

\newcommand{\cond}{\, | \,}

\newcommand{\mechIm}{\mathcal I} 
\newcommand{\measSet}{I} 
\newcommand{\dataSpace}{\mathcal X} 
\newcommand{\data}{\mathbf x} 
\newcommand{\dataNeigh}{\mathbf x'} 
\newcommand{\dataSize}{N} 
\newcommand{\tempDataSize}{N_0} 
\newcommand{\batch}{S} 
\newcommand{\batchNeigh}{S'} 
\newcommand{\batchSize}{b} 
\newcommand{\batchCommon}{S\setminus x_N} 

\newcommand{\appropto}{\mathrel{\vcenter{
  \offinterlineskip\halign{\hfil$##$\cr
    \propto\cr\noalign{\kern2pt}\sim\cr\noalign{\kern-2pt}}}}}

\usepackage{mathtools}
\usepackage{authblk}

\usepackage{xcolor}
\usepackage{hyperref}
\hypersetup{%
  linkbordercolor=white,
  citebordercolor=white
}

\title{Differentially Private Markov Chain Monte Carlo}
\date{}

\author{\textbf{Mikko A. Heikkil\"a}$^{*1}$, \textbf{Joonas J\"alk\"o}$^{*2}$, \textbf{Onur Dikmen}$^{3}$, and \textbf{Antti Honkela}$^{1,4,5}$\\
  {$^*$ Equal contribution} \\
  $^1$ Helsinki Institute for Information Technology HIIT,\\
    Department of Mathematics and Statistics, University of Helsinki, Finland \\
  $^2$ Helsinki Institute for Information Technology HIIT,\\
    Department of Computer Science, Aalto University, Finland \\
  $^3$ School of Information Technology, University of Halmstad, Sweden\\
  $^4$ Helsinki Institute for Information Technology HIIT,\\
    Department of Computer Science, University of Helsinki \\
  $^5$ Department of Public Health, University of Helsinki
  }

\begin{document}

\maketitle

\begin{abstract}

    Recent developments in differentially private (DP) machine
  learning and DP Bayesian learning have enabled learning under
  strong privacy guarantees for the training data subjects.
  In this paper, we further extend the applicability of DP Bayesian
  learning by presenting the first general DP Markov chain Monte Carlo (MCMC) 
  algorithm whose privacy-guarantees are not subject to unrealistic assumptions on
  Markov chain convergence and that is applicable to posterior inference in arbitrary models.  Our
  algorithm is based on a decomposition of the Barker acceptance test
  that allows evaluating the Rényi DP privacy cost of the
  accept-reject choice.  We further show how to improve the DP
  guarantee through data subsampling and approximate acceptance tests.

\end{abstract}

\section{Introduction}

Differential privacy (DP) \citep{dwork_et_al_2006, dwork_roth_2014}
and its generalisations to concentrated DP \citep{Dwork2016,Bun2016}
and Rényi DP \citep{Mironov_2017} have recently emerged as the
dominant framework for privacy-preserving machine learning. There are
DP versions of many popular machine learning algorithms, including highly popular
and effective DP stochastic gradient descent (SGD) \citep{Song2013}
for optimisation-based learning.

There has also been a fair amount of work in DP Bayesian machine
learning, with the proposed approaches falling to three main
categories: i) DP perturbation of sufficient statistics for inference in
exponential family models
\citep[e.g.][]{Zhang2015,Foulds2016,Park_2016,Bernstein2018}, ii) gradient
perturbation similar to DP SGD for stochastic gradient Markov chain
Monte Carlo (MCMC) and variational inference
\citep[e.g.][]{Wang2015ICML,Jalko_2016,Li2017}, and iii) DP guarantees for sampling
from the exact posterior typically realised using MCMC
\citep[e.g.][]{Dimitrakakis2014,Zhang2015,Geumlek_2017}.

None of these provide fully general solutions: i) sufficient
statistic perturbation methods are limited to a restricted set of
models, ii) stochastic gradient methods lack theoretical convergence
guarantees and are limited to models with continuous variables, iii) 
posterior sampling methods are applicable to general models, 
but the privacy is conditional 
on exact sampling from the posterior, which is usually impossible to
verify in practice.

In this paper, we present a new generic DP-MCMC method with strict,
non-asymptotic privacy guarantees that hold independently of the
chain's convergence. Our method is based on a recent Barker acceptance
test formulation \citep{Seita_2017}.

\subsection{Our contribution}

We present the first general-purpose DP MCMC
method with a DP guarantee under mild assumptions on the target
distribution.  We mitigate the privacy loss induced by the basic method 
through a subsampling-based approximation. 
We also improve on the existing method of \cite{Seita_2017} for subsampled MCMC, 
resulting in a significantly more accurate method for correcting 
the subsampling induced noise distribution.


\section{Background}
\label{sec:background}


\subsection{Differential privacy}

\begin{definition}[Differential privacy]
	\label{dp_definition}
	A randomized algorithm $\mathcal{M} : \dataSpace^\dataSize \rightarrow \mechIm$ satisfies 
	$(\epsilon, \delta)$ differential privacy, if for all adjacent datasets $\data, \dataNeigh \in \dataSpace^\dataSize$ and
	for all measurable $\measSet \subset \mechIm$ it holds that
	\begin{align}
		\Pr(\mathcal{M}(\data) \in \measSet) \leq e^\epsilon\Pr(\mathcal{M}(\dataNeigh) \in \measSet) + \delta.
	\end{align}
\end{definition}
Adjacency here means that $|\data|=|\dataNeigh|,$ and $\data$ differs from $\dataNeigh$ by a single element, e.g. 
by a single row corresponding to one individual's data in a data matrix.

Recently \citet{Mironov_2017} proposed a Rényi divergence \citep{Renyi_1961} based relaxation for differential
privacy called \textit{Rényi differential privacy} (RDP).

\begin{definition}[Rényi divergence]
	\label{renyi_divergence}
	Rényi divergence between two distributions $P$ and $Q$ defined over $\mechIm$ is defined as
	\begin{align}
		D_\alpha(P \, || \, Q) = 
		\frac{1}{\alpha-1}\log \mathbb{E}_{P} \left[ \left(\frac{p(X)}{q(X)}\right)^{\alpha-1} \right].
	\end{align}
\end{definition}

\begin{definition}[Rényi differential privacy]
	\label{rdp_definition}
	A randomized algorithm $\mathcal{M} : \dataSpace^\dataSize \rightarrow \mechIm$ is
	$(\alpha, \epsilon)$-RDP, if for all adjacent datasets $\data, \dataNeigh$
	it holds that
	\begin{align}
		D_\alpha(\mathcal{M}(\data) \, || \, \mathcal{M}(\dataNeigh)) \leq \epsilon \stackrel{\Delta}{=} \epsilon(\alpha).
	\end{align}
\end{definition}

Like DP, RDP has many useful properties such as invariance to post-processing. 
The main advantage of RDP compared 
to DP is the theory providing tight bounds for doing adaptive compositions, 
i.e., for combining the privacy losses from several possibly adaptive mechanisms accessing the same data, 
and subsampling \citep{Wang_2018}. RDP guarantees can always be converted to $(\epsilon,\delta)$-DP guarantees. These 
existing results are presented in detail in the Supplement.


\subsection{Subsampled MCMC using Barker acceptance}
\label{sec:non_dp_mcmc_theory}

The fundamental idea in standard MCMC methods \citep{Brooks_2011} is that a distribution $\pi(\theta)$ that can only be evaluated
up to a normalising constant,
is approximated by samples $\theta_1, \dots, \theta_t$ drawn from a suitable Markov chain. 
Denoting the current parameter values by $\theta$, the next value is generated using a proposal $\theta'$ drawn from a 
proposal distribution $q(\theta' |\theta)$. An acceptance test is used to 
determine if the chain should move to the proposed value or stay at the current one. 

Denoting the acceptance probability by $\alpha(\theta',\theta)$, a test that satisfies detailed balance \linebreak
$\pi(\theta) q(\theta' |\theta) \alpha(\theta',\theta)~=~\pi(\theta') q(\theta |\theta') \alpha(\theta,\theta') $ 
together with ergodicity of the chain are 
sufficient conditions to guarantee asymptotic convergence to the correct invariant 
distribution $\pi(\theta)$. 
In Bayesian inference, we are typically interested in sampling from the posterior distribution, i.e., 
$\pi(\theta) \propto p(\data|\theta) p (\theta).$ 
However, it is computationally infeasible to use 
e.g. the standard Metropolis-Hastings (M-H) test \citep{Metropolis_1953, Hastings_1970} 
with large datasets, since each iteration would require evaluating $p( \data | \theta)$.

To solve this problem, \citet{Seita_2017} formulate an approximate test that only uses a 
fraction of the data at each iteration. In the rest of this Section we briefly rephrase their 
arguments most relevant for our approach without too much details. A more in-depth treatment 
is then presented in deriving DP MCMC in Section \ref{sec:DP_MCMC}.

We start by assuming the data are exchangeable, so $p(\data | \theta ) = \prod_{x_i \in \data} p( x_i | \theta )$. 
Let 
\begin{equation}
	\label{eq:full_data_Delta}
	\Delta(\theta', \theta) = \sum_{x_i \in \data} \log \frac{ p(x_i | \theta')  }{ p(x_i | \theta) } + 
	\log \frac{ p(\theta') q(\theta | \theta') }{ p(\theta) q(\theta' | \theta) },
\end{equation}
where we suppress the parameters for brevity in the following, and let $V_{log} \sim \text{Logistic}(0,1)$. 
Instead of using the standard M-H acceptance probability $\min\{ \exp(\Delta), 1 \}$, 
\cite{Seita_2017} use a form of Barker acceptance test \citep{Barker_1965} to show that 
testing if 
\begin{equation}
	\label{eq:full_data_log_test}
	\Delta + V_{log} > 0
\end{equation}
also satisfies detailed balance. 
To ease the computational burden, we now want to use only a random subset $\batch
\subset \data$ of size $\batchSize$ instead of full data of size $\dataSize$ to evaluate acceptance. Let
\begin{equation}
	\label{eq:batch_data_Delta}
	\Delta^* (\theta', \theta) = \frac{\dataSize}{\batchSize} \sum_{x_i \in \batch} \log \frac{  p(x_i | \theta') }{ p(x_i | \theta) } + 
	\log \frac{ p(\theta') q(\theta | \theta') }{ p (\theta) q(\theta' | \theta) }.
\end{equation}

Omitting the parameters again, $\Delta^*$ is now an unbiased estimator for $\Delta,$ and assuming $x_i$ are iid 
samples from the data distribution, 
$\Delta^*$ has approximately normal distribution by the Central Limit Theorem (CLT).

In order to have a test that approximates the exact full data test \eqref{eq:full_data_log_test}, 
we decompose the logistic noise as $V_{log} \simeq V_{norm} + V_{cor},$ where $V_{norm}$ has a normal 
distribution and $V_{cor}$ is a suitable correction. Relying on the CLT and on this decomposition we write 
$\Delta^* + V_{cor} \simeq \Delta + V_{norm} + V_{cor} \simeq  \Delta + V_{log},$ so given the correction we can 
approximate the full data exact test using a minibatch.


\subsection{Tempering}
\label{sec:temperature_background}

When the sample size $\dataSize$ is very large, one general problem in Bayesian inference is that the posterior 
includes more and more details. This often leads to models that are much harder to interpret 
while only marginally more accurate than simpler models (see e.g. \citealt{Miller_2018}). 
One way of addressing this issue is to scale the log-likelihood ratios in 
\eqref{eq:full_data_Delta} and \eqref{eq:batch_data_Delta}, so instead of 
$ \log p(x_i | \theta) $ we would have $\tau \log p(x_i | \theta) $ with some $\tau.$ 
The effect of scaling with $0 <\tau < 1$ is then to spread the posterior mass more evenly. 
We will refer to this scaling as tempering.

As an interesting theoretical justification for tempering, \citet{Miller_2018} show a relation between tempered 
likelihoods and modelling error. 
The main idea is to take the error between the theoretical pure data and the actual observable data into 
account in the modelling. Denote the observed data with lowercase and errorless random variables with uppercase letters, 
and let $R \sim \text{Exp}(\beta).$ Then using empirical KL divergence as our modelling 
error estimator $d_{\dataSize},$ instead of the standard posterior we are looking for the posterior conditional on the observed 
data being close to the pure data, i.e., we want $p(\theta | d_{\dataSize} (x_{1:\dataSize},X_{1:\dataSize}) < R),$ which is 
called coarsened posterior or \textit{c-posterior}.

\citet{Miller_2018} show that with these assumptions 
\begin{equation}
p(\theta | d_{\dataSize}(x_{1:\dataSize}, X_{1:\dataSize}) < R) \appropto p(\mathbf{x | \theta})^{\xi_N}p(\theta), 
\end{equation}
where $\appropto$ means approximately proportional to, and $\xi_{\dataSize}= 1/(1+\dataSize/\beta)$, 
i.e., a posterior with tempered likelihoods can be interpreted  
as an approximate c-posterior.


\section{Privacy-preserving MCMC}
\label{sec:DP_MCMC}

Our aim is to sample from the posterior distribution of the model parameters while 
ensuring differential privacy. We start in Section \ref{sec:full_data_dp_mcmc} by formulating DP MCMC 
based on the exact full data Barker acceptance presented in Section \ref{sec:non_dp_mcmc_theory}. 
To improve on this basic algorithm, we then introduce subsampling in Section \ref{sec:batch_dp_mcmc}. 
The resulting DP subsampled MCMC algorithm has significantly better privacy 
guarantees as well as computational requirements than the full data version.


\subsection{DP MCMC}
\label{sec:full_data_dp_mcmc}

To achieve privacy-preserving MCMC, we repurpose the decomposition idea mentioned in 
Section \ref{sec:non_dp_mcmc_theory} with subsampling, i.e., 
we decompose $V_{log}$ in the exact test \eqref{eq:full_data_log_test} 
into normal and correction variables. Noting that $V_{log}$ has variance $\pi^2/3,$ 
fix $0 < C < \pi^2/3$ a constant and write 
\begin{equation}
	\label{eq:V_cor_distribution_full_data} 
	V_{log} \simeq \mathcal N (0,C) + V_{cor}^{(C)},
\end{equation}
where $V_{cor}^{(C)}$ is the correction with variance $\pi^2/3-C$. Now testing if 
\begin{equation}
\label{eq:full_data_appr_test}
\mathcal N (\Delta, C) + V_{cor}^{(C)} > 0
\end{equation}
is approximately equivalent to \eqref{eq:full_data_log_test}.

Since \eqref{eq:V_cor_distribution_full_data} holds exactly for no known distribution
$V_{cor}^{(C)}$ with an analytical expression, \citet{Seita_2017} construct an approximation by
discretising the convolution implicit in \eqref{eq:V_cor_distribution_full_data}, and turning
the problem into a ridge regression problem which can be solved easily. 
To achieve better privacy we want to work with relatively large values of $C$ for
which the ridge regression based solution does not give a good approximation. Instead, we propose to
use a Gaussian mixture model approximation, which gives good empirical performance for larger $C$ as
well. The details of the approximation with related discussion can be found in the Supplement.

In practice, if $V_{cor}^{(C)}$ is an approximation, the stationary distribution
of the chain might not be the exact posterior. However, when the approximation
\eqref{eq:V_cor_distribution_full_data} is good, the accept-reject decisions are rarely affected
and we can expect to stay close to the true posterior. Clearly, in the limit of decreasing $C$ 
we recover the exact test \eqref{eq:full_data_log_test}. We return to this topic in Section \ref{sec:batch_dp_mcmc}.

Considering privacy, on each MCMC iteration we access the data only through the log-likelihood ratio $\Delta$ 
in the test \eqref{eq:full_data_appr_test}. 
To achieve RDP, we therefore need a bound for the Rényi divergence between two Gaussians 
$\mathcal N_{\data} = \mathcal {N} (\Delta_{\data},C)$ and $\mathcal N_{\dataNeigh} = \mathcal N (\Delta_{\dataNeigh},C)$ corresponding to neighbouring datasets $\data, \dataNeigh$. 
The following Lemma states the Rényi divergence between two Gaussians: 
\begin{lemma}
	\label{gaussian_rd}
	Rényi divergence between two normals $\mathcal{N}_1$ and $\mathcal{N}_2$ with parameters
	$\mu_1, \sigma_1$ and $\mu_2, \sigma_2$ respectively is
	\begin{equation}
		D_{\alpha}(\mathcal{N}_1 \, || \, \mathcal{N}_2)=	
		\ln \frac{\sigma_2}{\sigma_1} +
		\frac{1}{2(\alpha-1)}\ln\frac{\sigma_2^2}{\sigma_\alpha^2}+ 
		\frac{\alpha}{2}\frac{(\mu_1-\mu_2)^2}{\sigma_\alpha^2},
	\end{equation}
	where $\sigma_\alpha^2 = \alpha\sigma_2^2+(1-\alpha)\sigma_1^2$.
\end{lemma}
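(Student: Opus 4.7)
The plan is a direct calculation starting from the definition. First I would note that by Definition \ref{renyi_divergence} we can rewrite
\begin{equation*}
D_\alpha(\mathcal N_1\,\|\,\mathcal N_2)=\frac{1}{\alpha-1}\ln\int p_1(x)^\alpha\,p_2(x)^{1-\alpha}\,dx,
\end{equation*}
where $p_i$ is the $\mathcal N(\mu_i,\sigma_i^2)$ density. Substituting the Gaussian densities, the prefactor pulls out as $(2\pi)^{-1/2}\sigma_1^{-\alpha}\sigma_2^{-(1-\alpha)}$ and the exponent collects into a single quadratic in $x$ of the form
\begin{equation*}
-\tfrac{1}{2}\Bigl(\tfrac{\alpha}{\sigma_1^2}+\tfrac{1-\alpha}{\sigma_2^2}\Bigr)x^2 + \Bigl(\tfrac{\alpha\mu_1}{\sigma_1^2}+\tfrac{(1-\alpha)\mu_2}{\sigma_2^2}\Bigr)x - \Bigl(\tfrac{\alpha\mu_1^2}{2\sigma_1^2}+\tfrac{(1-\alpha)\mu_2^2}{2\sigma_2^2}\Bigr).
\end{equation*}
The coefficient of $-x^2/2$ equals $\sigma_\alpha^2/(\sigma_1^2\sigma_2^2)$, which is the key identity that brings $\sigma_\alpha^2$ into play, and the integral converges precisely when $\sigma_\alpha^2>0$.

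Next I would complete the square. The quadratic becomes a shifted Gaussian with variance $\sigma_1^2\sigma_2^2/\sigma_\alpha^2$ and mean $\mu^* = (\alpha\mu_1\sigma_2^2+(1-\alpha)\mu_2\sigma_1^2)/\sigma_\alpha^2$, and integrating out $x$ yields the factor $\sqrt{2\pi\sigma_1^2\sigma_2^2/\sigma_\alpha^2}$. The residual constant in the exponent is
\begin{equation*}
\frac{\sigma_\alpha^2}{2\sigma_1^2\sigma_2^2}(\mu^*)^2 - \frac{\alpha\mu_1^2}{2\sigma_1^2} - \frac{(1-\alpha)\mu_2^2}{2\sigma_2^2}.
\end{equation*}
The main (though still routine) obstacle is to verify that this collapses cleanly; expanding $(\alpha\mu_1\sigma_2^2+(1-\alpha)\mu_2\sigma_1^2)^2$ and multiplying the other two terms by $\sigma_\alpha^2 = \alpha\sigma_2^2+(1-\alpha)\sigma_1^2$ causes the $\alpha^2\mu_1^2\sigma_2^4$ and $(1-\alpha)^2\mu_2^2\sigma_1^4$ terms to cancel, leaving the symmetric remainder $\alpha(1-\alpha)\sigma_1^2\sigma_2^2(\mu_1-\mu_2)^2$. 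Dividing out, the residual constant equals $-\alpha(1-\alpha)(\mu_1-\mu_2)^2/(2\sigma_\alpha^2) = \alpha(\alpha-1)(\mu_1-\mu_2)^2/(2\sigma_\alpha^2)$.

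Putting the pieces together gives
\begin{equation*}
\int p_1^\alpha p_2^{1-\alpha}\,dx = \frac{\sigma_1^{1-\alpha}\sigma_2^{\alpha}}{\sigma_\alpha}\,\exp\!\left(\frac{\alpha(\alpha-1)(\mu_1-\mu_2)^2}{2\sigma_\alpha^2}\right).
\end{equation*}
Finally I would take the logarithm, divide by $\alpha-1$, and rearrange: the exponential contributes exactly $\frac{\alpha}{2}(\mu_1-\mu_2)^2/\sigma_\alpha^2$, while $\frac{1}{\alpha-1}[(1-\alpha)\ln\sigma_1+\alpha\ln\sigma_2-\tfrac{1}{2}\ln\sigma_\alpha^2]$ splits, after adding and subtracting $\ln\sigma_2$, into $\ln(\sigma_2/\sigma_1) + \frac{1}{2(\alpha-1)}\ln(\sigma_2^2/\sigma_\alpha^2)$, which matches the stated formula. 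No step requires anything beyond Gaussian integration and bookkeeping; the only care needed is to track the algebra of the constant term, which is where I expect a misstep would most plausibly occur.
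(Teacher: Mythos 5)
Your derivation is correct, and every step checks out: the coefficient of $-x^2/2$ is indeed $\sigma_\alpha^2/(\sigma_1^2\sigma_2^2)$, the cross-term cancellation leaves $-\alpha(1-\alpha)\sigma_1^2\sigma_2^2(\mu_1-\mu_2)^2$ in the numerator (your final residual constant $\alpha(\alpha-1)(\mu_1-\mu_2)^2/(2\sigma_\alpha^2)$ has the right sign), and the closed form $\sigma_1^{1-\alpha}\sigma_2^{\alpha}\sigma_\alpha^{-1}\exp\bigl(\alpha(\alpha-1)(\mu_1-\mu_2)^2/(2\sigma_\alpha^2)\bigr)$ rearranges to the stated expression. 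The paper does not actually prove this lemma; it simply cites Table~2 of Gil et al.\ (2013), so your self-contained Gaussian-integral computation is the standard argument behind that table entry rather than a competing route. One small point in your favour over the paper's bare citation: you make explicit the convergence condition $\sigma_\alpha^2 = \alpha\sigma_2^2+(1-\alpha)\sigma_1^2>0$, which the lemma statement omits but which the paper must (and does, implicitly) verify later when it applies the formula with unequal variances in Theorem~\ref{rdp_bound}.
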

\begin{proof}
	See \citep{gil2013renyi} Table 2.
\end{proof}

\begin{theorem}
	\label{full_data_theorem}	
	Assume either 
	\begin{align}
		\label{smoothness_assumption}
		|\log p(x_i \cond \theta')-\log p(x_i \cond \theta)| \leq B
	\end{align}
	or 
	\begin{align}
		\label{smoothness_assumption2}
		|\log p(x_i \cond \theta)-\log p(x_j \cond \theta)| \leq B,
	\end{align}
	for all $x_i, x_j$ and for all $\theta, \theta'$.
	Releasing a result of the accept/reject
	decision from the test \eqref{eq:full_data_appr_test} is 
	$(\alpha, \epsilon)$-RDP with $\label{eq:full_data_rdp}\epsilon = 2\alpha B^2/C$.
\end{theorem}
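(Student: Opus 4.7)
The plan is to reduce the theorem to a sensitivity-plus-Gaussian-mechanism argument and then invoke post-processing invariance of RDP. On each iteration the only data-dependent quantity in the test \eqref{eq:full_data_appr_test} is $\Delta$; the correction variable $V_{cor}^{(C)}$ is drawn independently of the data, and the accept/reject map is a deterministic thresholding. So if I can bound the RDP of releasing the noisy statistic $\mathcal N(\Delta,C)$ between neighbouring datasets, I get the same bound for the accept/reject decision for free by RDP post-processing.

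First I would bound the global sensitivity of $\Delta$. The prior and proposal terms in \eqref{eq:full_data_Delta} cancel between neighbours, so for $\data,\dataNeigh$ differing only in a single entry (say $x_N$ vs.\ $x_N'$),
\begin{equation}
\Delta_{\data}(\theta',\theta)-\Delta_{\dataNeigh}(\theta',\theta)
= \log\frac{p(x_N\cond\theta')}{p(x_N\cond\theta)} - \log\frac{p(x_N'\cond\theta')}{p(x_N'\cond\theta)}.
\end{equation}
Under assumption \eqref{smoothness_assumption} each of the two log-ratios is bounded in absolute value by $B$, so the triangle inequality gives $|\Delta_{\data}-\Delta_{\dataNeigh}|\leq 2B$. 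Under the alternative assumption \eqref{smoothness_assumption2} I would instead regroup the difference as $(\log p(x_N\cond\theta')-\log p(x_N'\cond\theta')) - (\log p(x_N\cond\theta)-\log p(x_N'\cond\theta))$, and each bracket is again bounded by $B$, yielding the same sensitivity $2B$.

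Next I apply Lemma \ref{gaussian_rd} with $\mu_1=\Delta_{\data}$, $\mu_2=\Delta_{\dataNeigh}$, and $\sigma_1^2=\sigma_2^2=C$. Since the variances coincide, $\sigma_\alpha^2=\alpha C+(1-\alpha)C = C$, the first two terms of the divergence formula vanish, and what remains is
\begin{equation}
D_\alpha\!\left(\mathcal N_{\data}\,\|\,\mathcal N_{\dataNeigh}\right)
=\frac{\alpha(\Delta_{\data}-\Delta_{\dataNeigh})^2}{2C}
\leq \frac{\alpha(2B)^2}{2C}=\frac{2\alpha B^2}{C}.
\end{equation}
Finally, because $V_{cor}^{(C)}$ is drawn from a fixed data-independent distribution and the accept/reject decision is a deterministic function of $\mathcal N(\Delta,C)+V_{cor}^{(C)}$, both operations are post-processing and cannot increase the Rényi divergence. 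This yields the claimed $(\alpha,2\alpha B^2/C)$-RDP bound.

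I do not anticipate a genuine obstacle here; the only subtlety is verifying that the sensitivity bound $2B$ holds uniformly under both formulations \eqref{smoothness_assumption} and \eqref{smoothness_assumption2}, and that the regrouping trick used under \eqref{smoothness_assumption2} works for an arbitrary pair $\theta,\theta'$ without any extra assumption on the proposal.
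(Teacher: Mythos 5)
Your proposal is correct and follows essentially the same route as the paper's own proof: with equal variances the Rényi divergence from Lemma \ref{gaussian_rd} collapses to $\frac{\alpha}{2C}(\mu_1-\mu_2)^2$, the sensitivity of $\Delta$ is bounded by $2B$ via the triangle inequality under \eqref{smoothness_assumption} and via the same regrouping trick under \eqref{smoothness_assumption2}, and the accept/reject decision is covered by post-processing. If anything, you are slightly more explicit than the paper in spelling out the cancellation of the prior/proposal terms and the post-processing step for $V_{cor}^{(C)}$ and the thresholding, which the paper leaves implicit.
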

\begin{proof}
	Follows from Lemma \ref{gaussian_rd}. See Supplement for further details.
\end{proof}
Using the composition property of RDP (see Supplement), it is straightforward to 
get the following Corollary for the whole chain:
\begin{corollary}
	\label{prop:full_chain_rdp}
	Releasing an MCMC chain of\/ $T$ iterations, where at each iteration the 
	accept-reject decision is done using the test \eqref{eq:full_data_appr_test}, 
	satisfies $(\alpha, \epsilon')$-RDP with $\epsilon' = T2\alpha B^2/C$.
\end{corollary}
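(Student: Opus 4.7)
The plan is to reduce Corollary \ref{prop:full_chain_rdp} to Theorem \ref{full_data_theorem} via the adaptive composition theorem for R\'enyi differential privacy. First I would isolate, at a generic iteration $t$, which parts of the computation actually touch the data: the proposal $\theta'_t \sim q(\cdot\mid\theta_t)$ and the auxiliary noise draws $\mathcal{N}(0,C)$ and $V_{cor}^{(C)}$ are data-independent, so the only data-dependent quantity entering the test \eqref{eq:full_data_appr_test} is the log-likelihood-ratio sum inside $\Delta_t$. Conditionally on the history $(\theta_1,\dots,\theta_t,\theta'_t)$, the accept/reject decision at step $t$ is therefore exactly the mechanism analysed in Theorem \ref{full_data_theorem}, hence $(\alpha, 2\alpha B^2/C)$-RDP with the same bound $B$ for any conditioning.

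Next I would invoke the adaptive composition property of RDP (restated in the Supplement): if mechanisms $\mathcal{M}_1,\dots,\mathcal{M}_T$ are applied to the same dataset, each $\mathcal{M}_t$ is allowed to depend on the outputs of $\mathcal{M}_1,\dots,\mathcal{M}_{t-1}$, and each is $(\alpha,\epsilon_t)$-RDP at the same order $\alpha$, then the composed mechanism is $(\alpha,\sum_{t=1}^T \epsilon_t)$-RDP. Applied to our setting with $\epsilon_t = 2\alpha B^2/C$ for every $t$, this immediately gives $\epsilon' = T\cdot 2\alpha B^2/C$ as claimed.

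Finally, I would close the argument by a post-processing remark: the released MCMC chain $\theta_1,\dots,\theta_T$ is obtained from the sequence of accept/reject bits by combining them with the data-independent proposals and noise, so by the post-processing invariance of RDP the bound derived for the bits transfers to the full chain. The main (and only) subtlety to watch for is that the per-iteration RDP guarantee of Theorem \ref{full_data_theorem} is uniform in the conditioning on previously released outputs, since the bound $2\alpha B^2/C$ does not depend on $\theta_t$ or $\theta'_t$; this is precisely what makes the adaptive-composition step go through without any additional loss.
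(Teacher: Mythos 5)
Your proposal is correct and takes essentially the same route as the paper: the paper's own argument is simply to apply Theorem \ref{full_data_theorem} per iteration and invoke the RDP composition property (Proposition 1 of Mironov, restated in the Supplement) to get the factor $T$. Your additional remarks on the uniformity of the per-iteration bound under conditioning on the history and on post-processing invariance are a faithful (and slightly more careful) expansion of what the paper leaves implicit.
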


We can satisfy the condition \eqref{smoothness_assumption} with 
sufficiently smooth likelihoods and a proposal distribution with a bounded domain:
\begin{lemma}
	Assuming the model log-likelihoods are $L$-Lipschitz over $\theta$ and the diameter of the proposal distribution domain
	is bounded by $d_{\theta}$, LHS of \eqref{smoothness_assumption} 
	is bounded by $Ld_\theta$. 
\end{lemma}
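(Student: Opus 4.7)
The plan is to chain the two hypotheses together via a one-line estimate. By definition of $L$-Lipschitzness of the map $\theta\mapsto \log p(x_i\mid\theta)$ (which holds uniformly in $x_i$ under the stated assumption), we have
\begin{equation*}
|\log p(x_i\mid\theta') - \log p(x_i\mid\theta)| \leq L\,\|\theta'-\theta\|
\end{equation*}
for any admissible $\theta,\theta'$. The remaining task is to bound $\|\theta'-\theta\|$ by $d_\theta$.

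For this, I would invoke the assumption that the proposal distribution $q(\theta'\mid\theta)$ has domain (i.e., support) of diameter at most $d_\theta$. Since at every MCMC step the current state $\theta$ and the proposed state $\theta'$ both lie in this domain (the current state being either the initial point, assumed to be chosen in the same bounded region, or a previously proposed point), the definition of diameter gives $\|\theta'-\theta\|\leq d_\theta$. Combining this with the Lipschitz estimate yields the claimed bound $L\,d_\theta$, which holds uniformly over all $x_i$ and over all $\theta,\theta'$ reachable by the chain, giving the form of \eqref{smoothness_assumption}.

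There is no real obstacle in the argument; it is a direct application of the two hypotheses. The only mild subtlety worth flagging is the interpretation of ``proposal distribution domain'': I would read it as the support of $q(\cdot\mid\theta)$, which for standard choices (e.g., a proposal truncated to a bounded parameter region, or a random walk on a bounded set) has a well-defined diameter. If instead the proposal were unbounded (say Gaussian), the lemma's hypothesis would fail, so the lemma implicitly restricts to proposals confined to a bounded region, for instance by truncation or by a reparameterisation onto a compact space.
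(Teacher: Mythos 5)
Your proof is correct and matches the paper's one-line argument exactly: apply $L$-Lipschitzness to get $L\|\theta'-\theta\|$, then bound $\|\theta'-\theta\|$ by the diameter $d_\theta$ of the proposal's domain. The additional remarks on interpreting ``domain'' as a bounded support are reasonable but not needed beyond what the paper itself assumes.
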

\begin{proof}
	\begin{equation}
		\left|\log p(x_i \cond \theta)-\log p(x_i \cond \theta')\right| 
		\leq L |\theta-\theta'| \leq Ld_\theta .
	\end{equation}
\end{proof}
Clearly, when $Ld_\theta \leq B$ we satisfy the condition in Equation \eqref{smoothness_assumption}.

For some models, using a proposal distribution with a bounded domain could affect the ergodicity of the chain. 
Considering models that are not Lipschitz or using an unbounded proposal distribution, 
we can also satisfy the boundedness condition \eqref{smoothness_assumption} by clipping the log-likelihood 
ratios to a suitable interval.


\subsection{DP subsampled MCMC}
\label{sec:batch_dp_mcmc}

In Section \ref{sec:full_data_dp_mcmc} we showed that we can release samples from the MCMC algorithm
under privacy guarantees. However, as already discussed, evaluating the log-likelihood ratios might require too much 
computation with large datasets. Using 
the full dataset in the DP MCMC setting might also be infeasible for privacy reasons: the noise variance
$C$ in Theorem \ref{full_data_theorem} is upper-bounded by the variance of the logistic random
variable, and thus working under a strict privacy budget we might be able to run the chain for only
a few iterations before $\epsilon'$ in Corollary \ref{prop:full_chain_rdp} exceeds our budget. Using
only a subsample $\batch$ of the data at each MCMC iteration allows us to reduce not only the computational
cost but also the privacy cost through privacy amplification \citep{Wang_2018}.

As stated in Section \ref{sec:non_dp_mcmc_theory}, for the subsampled variant according to the CLT we have 
\begin{equation}
	\label{eq:Delta_star}
	\Delta^* = \Delta + \tilde V_{norm},
\end{equation}
where $\tilde V_{norm}$ is approximately normal with some variance $\sigma_{\Delta^*}^2$. 
Assuming 
\begin{equation}
	\label{eq:batch_variance}
	\sigma_{\Delta^*}^2 < C < \pi^2/3
\end{equation}
for some constant $C$, we now reformulate the decomposition \eqref{eq:V_cor_distribution_full_data} as
\begin{equation}
	\label{eq:V_cor_distribution}
	V_{log} \simeq \underset{ \sim \mathcal N (0,C) }{ \underbrace{V_{norm} + V_{nc}} } + V_{cor}^{(C)},  
\end{equation}
where $V_{norm} \sim \mathcal{N}(0, \sigma_{\Delta^*}^2)$ and $V_{nc} \sim \mathcal{N}(0, C-\sigma_{\Delta^*}^2)$.
We can now write 
\begin{equation}
\label{eq:approximations}
\Delta^* + V_{nc} + V_{cor}^{(C)} \simeq \Delta + V_{norm} + V_{nc} + V_{cor}^{(C)}  \simeq  \Delta + V_{log},
\end{equation}
where the first approximation is justified by the CLT, and the second by the decomposition \eqref{eq:V_cor_distribution}. 
Therefore, testing if 
\begin{equation}
\label{eq:batch_data_log_test}
\mathcal N (\Delta^*, C - \sigma_{\Delta^*}^2 ) + V_{cor}^{(C)} > 0
\end{equation}
approximates the exact full data test \eqref{eq:full_data_log_test}.

As in Section \ref{sec:full_data_dp_mcmc}, the approximations used for arriving at the test \eqref{eq:batch_data_log_test} imply  
that the stationary distribution of the chain need not be the exact posterior. However, we can expect to stay  
close to the true posterior when the approximations are good, since the result only changes if the binary accept-reject decision is 
affected. This is exemplified by the testing in Section \ref{sec:experiments} (see also \citealt{Seita_2017}). 
The quality of the first approximation in \eqref{eq:approximations} depends on the batch size $\batchSize$, 
which should not be too small. As for the second error source, as already noted in 
Section \ref{sec:full_data_dp_mcmc} we markedly improve on this with the GMM 
based approximation, and the resulting error is typically very small (see Supplement). 
In some cases there are known theoretical upper bounds for the total error w.r.t. the true posterior. 
These bounds are of limited practical value since they rely on assumptions that can be hard to meet in general,  
and we therefore defer them to the Supplement.

For privacy, similarly as in Section \ref{sec:full_data_dp_mcmc}, in
\eqref{eq:batch_data_log_test} we need to access the data only for calculating $\Delta^* +
V_{nc}$. Thus, it suffices to privately release a sample from $\mathcal{N}_{\batch} =
\mathcal{N}(\Delta_{\data}^*, C-s_{\Delta_{\data}^*}^2)$, where $s_{\Delta_{\data}^*}^2$ denotes the sample variance 
when sampling from dataset $\data,$ i.e., we need to bound the Rényi divergence between $\mathcal{N}_{\batch}$ 
and $\mathcal{N}_{\batchNeigh}.$ 
We use noise variance $C=2$ in the following analysis.

Next, we will state our main theorem giving an explicit bound that can be used for 
calculating the privacy loss for a single MCMC iteration with subsampling:
\begin{theorem}
	\label{rdp_bound}
	Assuming 
	\begin{align}
		\left|\log p(x_i|\theta')- \log p(x_i|\theta)\right| &\leq \frac{\sqrt{\batchSize}}{\dataSize} \label{1st_assumption}, \\ 
		\alpha &< \frac{\batchSize}{5} \label{2nd_assumption}, 
	\end{align}
	where $\batchSize$ is the size of the minibatch $\batch$ and $\dataSize$ is the dataset size, 
	releasing a sample from $\mathcal N_\batch$ satisfies $(\alpha, \epsilon)$-RDP with
	\begin{align}
		\epsilon = 
		\frac{5}{2\batchSize} + \frac{1}{2(\alpha-1)}  \ln \frac{2\batchSize}{\batchSize - 5 \alpha}
		+ \frac{2\alpha}{\batchSize-5\alpha}.
	\end{align}
\end{theorem}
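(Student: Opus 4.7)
Because the mechanism releases a single Gaussian draw $\mathcal N_\batch=\mathcal N(\Delta^*_\data,\,2-s^2_{\Delta^*_\data})$, the natural plan is to start from Lemma~\ref{gaussian_rd} applied to $(\mu_1,\sigma_1^2)=(\Delta^*_\data,\,2-s^2_{\Delta^*_\data})$ and $(\mu_2,\sigma_2^2)=(\Delta^*_\dataNeigh,\,2-s^2_{\Delta^*_\dataNeigh})$. The three summands of the target $\epsilon$ map one-to-one onto the three summands of the Gaussian Rényi divergence, so I would read them off in turn after controlling $(\mu_1-\mu_2)^2$, the individual $\sigma_i^2$, and a lower bound on $\sigma_\alpha^2=\alpha\sigma_2^2+(1-\alpha)\sigma_1^2$.

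The mean gap is the easiest piece: since adjacent batches differ in exactly one element, the prior--proposal term in \eqref{eq:batch_data_Delta} cancels, and assumption \eqref{1st_assumption} gives $|\mu_1-\mu_2|\leq (\dataSize/\batchSize)\cdot 2\sqrt{\batchSize}/\dataSize = 2/\sqrt{\batchSize}$, hence $(\mu_1-\mu_2)^2\leq 4/\batchSize$. Substituting this into $\alpha(\mu_1-\mu_2)^2/(2\sigma_\alpha^2)$ yields the third summand $2\alpha/(\batchSize-5\alpha)$ once the lower bound $\sigma_\alpha^2\geq(\batchSize-5\alpha)/\batchSize$ is in place. For the individual variances I would apply Popoviciu's inequality to the bounded quantities $\ell_{x_i}:=\log[p(x_i|\theta')/p(x_i|\theta)]$, which caps $s^2_{\Delta^*}$ at roughly $1$ and places $\sigma_i^2\in[1,2]$; the upper bound $\sigma_2^2\leq 2$ then furnishes the $\ln(2\batchSize/(\batchSize-5\alpha))$ in the second summand. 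The quantitative heart of the argument is a perturbation computation on the sample variance: writing $s^2_{\Delta^*}=(\dataSize^2/\batchSize)\,s^2_\ell$ and differentiating in the single coordinate that flips between $\batch$ and $\batchNeigh$ gives a sensitivity $|s^2_{\Delta^*_\data}-s^2_{\Delta^*_\dataNeigh}|\lesssim 1/\batchSize$; plugged into $\sigma_\alpha^2=\sigma_1^2-\alpha(\sigma_1^2-\sigma_2^2)$ with $\sigma_1^2\geq 1$ this delivers $\sigma_\alpha^2\geq(\batchSize-5\alpha)/\batchSize$, and assumption \eqref{2nd_assumption} is precisely what keeps the right-hand side positive. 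The first summand $5/(2\batchSize)$ finally comes from $\ln(\sigma_2/\sigma_1)=\tfrac12\ln\!\bigl(1+(\sigma_2^2-\sigma_1^2)/\sigma_1^2\bigr)$ via $\ln(1+u)\leq u$ together with the same sample-variance sensitivity bound.

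The main obstacle I expect is pinning down the constants in the sensitivity analysis for the sample variance so that the single threshold $\alpha<\batchSize/5$ simultaneously ensures $\sigma_\alpha^2>0$ and reproduces the numerator $5$ in all three summands of the stated $\epsilon$; the remaining manipulations are routine once Lemma~\ref{gaussian_rd} has been invoked.
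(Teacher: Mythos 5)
Your plan reproduces the paper's proof essentially step for step: invoke Lemma~\ref{gaussian_rd}, get $(\mu_1-\mu_2)^2\leq 4/\batchSize$ from the single differing element, use $|r_i|\leq\sqrt{\batchSize}/\dataSize$ to place both variances in $[1,2)$, and bound the sample-variance sensitivity so that $\sigma_\alpha^2\geq 1-5\alpha/\batchSize$, with \eqref{2nd_assumption} keeping this positive. The one computation you defer does close exactly as you hope: writing $c=\sqrt{\batchSize}/\dataSize$ and expanding the difference of sample variances over the common elements gives
\begin{equation*}
\left|\sigma_1^2-\sigma_2^2\right|=\frac{\dataSize^2}{\batchSize^3}\left|(\batchSize-1)\bigl(r_{1,\batchSize}^2-r_{2,\batchSize}^2\bigr)-2\Bigl(\sum_{i\in \batchCommon}r_i\Bigr)\bigl(r_{1,\batchSize}-r_{2,\batchSize}\bigr)\right|\leq \frac{\dataSize^2}{\batchSize^3}\bigl[(\batchSize-1)c^2+2(\batchSize-1)c\cdot 2c\bigr]\leq \frac{5}{\batchSize},
\end{equation*}
which is precisely where the numerator $5$ in all three summands originates.
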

\begin{proof}
The idea of the proof is straightforward: we need to find an upper bound for each of the terms 
in Lemma \ref{gaussian_rd}, which can be done using standard techniques. Note that for $C=2$ \eqref{1st_assumption} implies that 
the variance assumption \eqref{eq:batch_variance} holds. See Supplement for the full derivation.
\end{proof}

Using the composition \citep{Mironov_2017} and subsampling amplification \citep{Wang_2018} properties of Rényi DP 
(see Supplement), we immediately get the following:
\begin{corollary}
	\label{cor:subsampled_dp_budget}
	Releasing a chain of $T$ subsampled MCMC iterations with sampling ratio $q$, each satisfying 
	$(\alpha, \epsilon(\alpha))$-RDP with $\epsilon(\alpha)$ from Theorem \ref{rdp_bound}, is 
	$(\alpha, T\epsilon')$-RDP with 
	\begin{equation}
		\epsilon' = \frac{1}{\alpha-1} \log \Big ( 
		1+q^2 \binom{\alpha}{2} \min \{ 4(e^{\epsilon(2)}-1), 2e^{\epsilon(2)} \} 
			 + 2\sum_{j=3}^{\alpha} q^j \binom{\alpha}{j} e^{(j-1)\epsilon(j)} \Big ).
	\end{equation}
\end{corollary}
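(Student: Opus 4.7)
The plan is to assemble the bound from two standard RDP tools: the subsampling amplification theorem of \citet{Wang_2018} applied to a single MCMC iteration, followed by Mironov's adaptive composition theorem applied across the $T$ iterations.

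First, I would view one iteration as a two-stage mechanism consisting of (i) subsampling the minibatch $\batch$ at rate $q$, and (ii) releasing a sample from $\mathcal{N}_{\batch}$ as in \eqref{eq:batch_data_log_test}. Theorem \ref{rdp_bound} already tells us that, conditioned on the batch, stage (ii) is $(\alpha, \epsilon(\alpha))$-RDP for every integer $\alpha$ satisfying the hypothesis $\alpha < \batchSize/5$. The subsampling amplification result of \citet{Wang_2018} takes a base mechanism with such per-order RDP guarantees and, for integer $\alpha$, bounds the Rényi divergence of the subsampled composition by expanding the relevant moment generating function in powers of $q$. The expansion's linear-in-$q$ term vanishes because the two neighbouring-dataset distributions agree in expectation after subsampling; the quadratic term is controlled using the tighter of two standard bounds on the second moment, which is exactly the $\min\{4(e^{\epsilon(2)}-1), 2e^{\epsilon(2)}\}$ factor; and the higher-order terms use the elementary bound $e^{(j-1)\epsilon(j)}$. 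Matching terms with the statement gives precisely the single-iteration guarantee $(\alpha, \epsilon')$-RDP.

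Second, I would invoke Mironov's adaptive composition theorem for RDP, which states that the composition of mechanisms that are individually $(\alpha, \epsilon_i)$-RDP is $(\alpha, \sum_i \epsilon_i)$-RDP, even when each mechanism may be chosen adaptively based on outputs of earlier ones. Since MCMC is inherently adaptive (each proposal depends on the current state of the chain), adaptive composition is exactly what is required. Applying it to the $T$ iterations, each $(\alpha, \epsilon')$-RDP by the previous paragraph, yields the claimed $(\alpha, T\epsilon')$-RDP bound for the released chain.

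The argument is essentially a black-box combination of two cited results, so I do not expect a real obstacle; the only care needed is in verifying that the hypotheses of the amplification theorem are satisfied, namely that $\alpha$ is an integer within the range for which Theorem \ref{rdp_bound} applies and that the subsampling scheme and notion of neighbouring datasets used here match those assumed in \citet{Wang_2018}. Both are immediate from the setup in Section \ref{sec:background} and from the assumption \eqref{2nd_assumption}.
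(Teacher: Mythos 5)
Your proposal is correct and follows essentially the same route as the paper: apply the subsampling amplification bound of \citet{Wang_2018} (with the $\min\{2,(e^{\epsilon(\infty)}-1)^j\}$ factors relaxed to $2$, which is where the leading $2$ in the sum and the $2e^{\epsilon(2)}$ term come from) to obtain the per-iteration guarantee $\epsilon'$, then invoke Mironov's adaptive composition over the $T$ iterations. The only nuance worth flagging is that you describe the higher-order terms as using the bound $e^{(j-1)\epsilon(j)}$ without the factor of $2$, but since you match the final expression to the statement this is cosmetic rather than a gap.
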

Figures~\ref{fig:eps_vs_q}\ and \ref{fig:eps_vs_T} illustrate how changing the parameters $q$ and $T$ in 
Corollary \ref{cor:subsampled_dp_budget} will affect the privacy budget of DP MCMC.

\begin{figure*}[h!bt]
\subfigure[Privacy vs. subsampling ratio]
{\label{fig:eps_vs_q}
 \includegraphics[width=0.33\textwidth]{./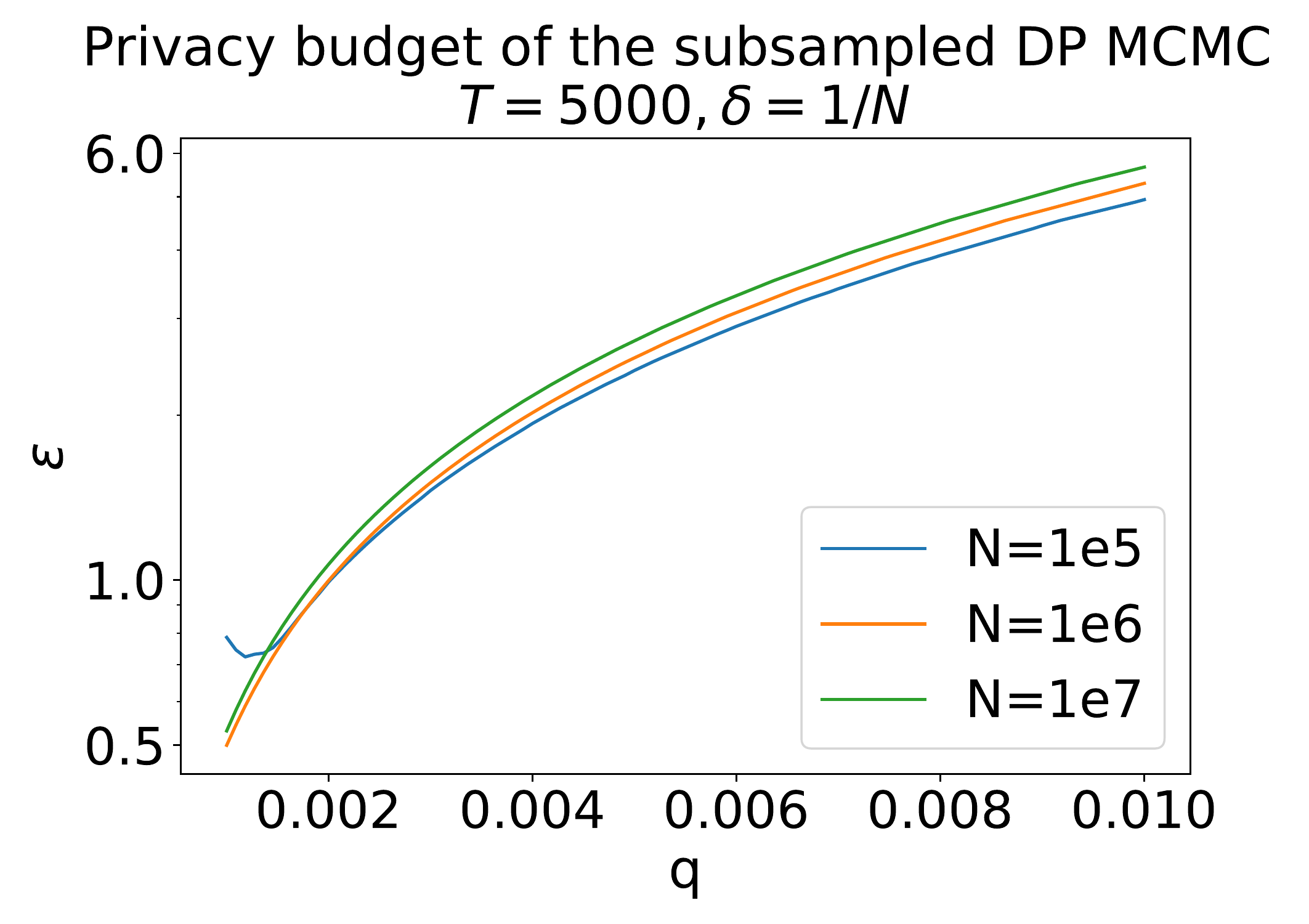}
 }%
 \subfigure[Privacy vs. iterations]
{\label{fig:eps_vs_T}
 \includegraphics[width=0.33\textwidth]{./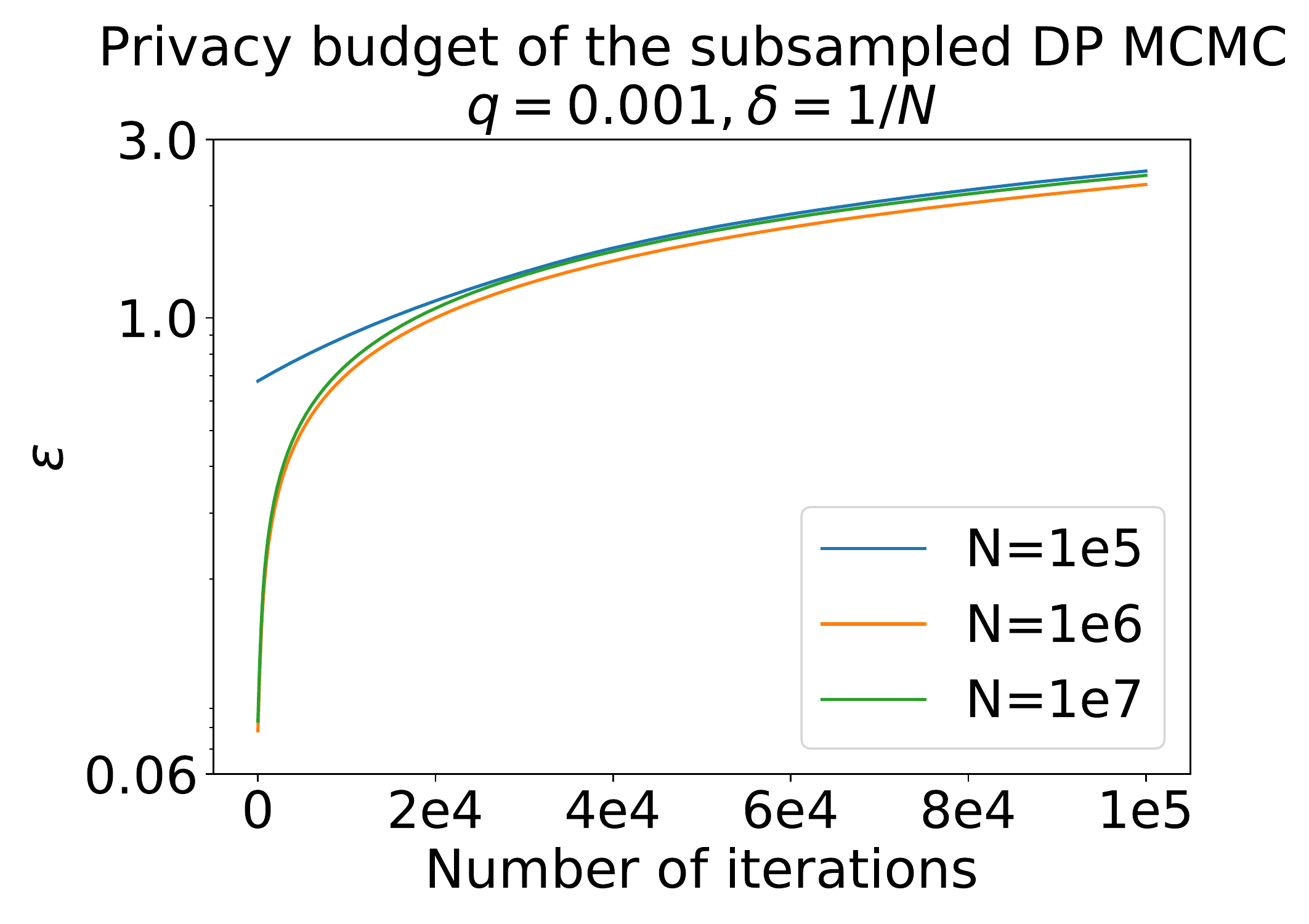}
 }%
\subfigure[Clipping vs. proposal variance]
{\label{fig:clip_vs_var}
 \includegraphics[width=.33\textwidth]{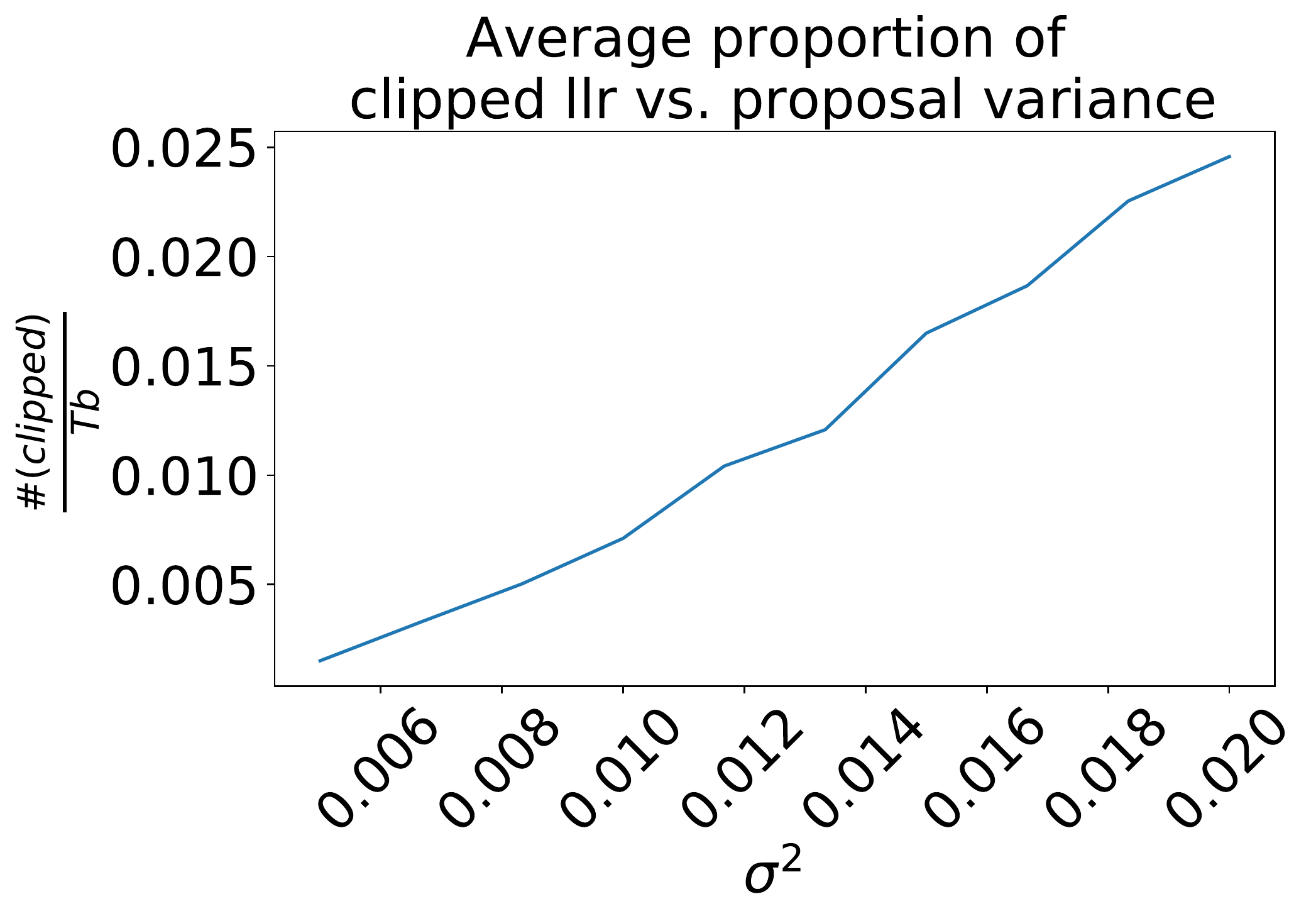}%
 }%
\caption{Parameter effects. Calculating total privacy budget from Corollary 
		\ref{cor:subsampled_dp_budget} for different dataset sizes: in Figure~\ref{fig:eps_vs_q} as a function of subsampling
		ratio, and in Figure~\ref{fig:eps_vs_T} as a function of number of iterations. Figure~\ref{fig:clip_vs_var} shows the 
		proportion of clipped log-likelihood ratios as a function of proposal variance for 
  		the GMM example detailed in Section \ref{sec:experiments}.\label{fig:param_effects}}
\end{figure*}

Similarly as in the full data case in Section \ref{sec:full_data_dp_mcmc}, we can satisfy the condition \eqref{1st_assumption} 
with sufficiently smooth likelihoods or by clipping. Figure \ref{fig:clip_vs_var} shows how frequently we 
need to clip the log-likelihood ratios to maintain the bound in \eqref{1st_assumption} as a function of proposal 
variance using a Gaussian mixture model problem defined in Section \ref{sec:experiments}. Using smaller proposal variance
will result in smaller changes in the log-likelihoods between the previous and the proposed parameter values, 
which entails fewer clipped values.

However, the bound in \eqref{1st_assumption} gets tighter with increasing $\dataSize$. To counterbalance
this, either the proposals need to be closer to the current value (assuming suitably smooth
log-likelihood), resulting in a slower mixing chain, or $\batchSize$ needs to increase, affecting privacy
amplification. For very large $\dataSize$ we would therefore like to temper the log-likelihood ratios in a
way that we could use sufficiently small batches to benefit from privacy amplification, while
still preserving sufficient amount of information from the likelihoods and reasonable mixing
properties. Using the c-posterior discussed in Section \ref{sec:temperature_background} with
parameter $\beta$ s.t. $\tempDataSize = \dataSize \beta/(\beta+\dataSize)$, instead of condition \eqref{1st_assumption} we then
require
\begin{align}
	\label{eq:tempered_fraction}
	\left|\log p(x_i|\theta')-\log p(x_i|\theta)\right| &\leq \frac{\sqrt{\batchSize}}{\tempDataSize},
\end{align}
which does not depend on $\dataSize.$


\section{Experiments}
\label{sec:experiments}

In order to demonstrate our proposed method in practice, we use a simple 2-dimensional Gaussian mixture model\footnote{The code for running all the experiments will be made freely available.}, 
that has been used by \citet{Welling_2011} and \citet{Seita_2017} in the non-private setting:
\begin{align}
\theta_j & \sim \mathcal N (0, \sigma_j^2,), \quad j= 1,2 \\
x_i & \sim 0.5 \cdot \mathcal N (\theta_1, \sigma_x^2) + 0.5 \cdot \mathcal N (\theta_1+\theta_2, \sigma_x^2),
\end{align}
where $\sigma_1^2 = 10, \sigma_2^2 = 1, \sigma_x^2=2 .$ For the observed data, we use fixed parameter values $\theta = (0, 1). $ 
Following \citet{Seita_2017}, we generate $10^6$ samples from the model to use as training data. We use $\batchSize=1000$ for the 
minibatches, and adjust the temperature of the chain s.t. $\tempDataSize=100$ in \eqref{eq:tempered_fraction}. This corresponds to the temperature 
used by \citet{Seita_2017} in their non-private test.

If we have absolutely no idea of a good initial range for the parameter values, especially in higher 
dimensions the chain might waste the privacy budget in moving towards areas with higher posterior probability. 
In such cases we might want to initialise the chain in at least somewhat reasonable location, 
which will cost additional privacy. 
To simulate this effect, we use the differentially private variational inference (DPVI) introduced 
by \citet{Jalko_2016} with a small privacy budget $(0.21, 10^{-6})$ to find a rough estimate 
for the initial location.

As shown in Figure \ref{fig:MoG_figure}, the samples from the tempered chain with DP are nearly 
indistinguishable from the samples drawn from the non-private tempered chain. 
Figure \ref{fig:accuracy_figure} illustrates how the accuracy is affected by privacy. 
Posterior means and variances are computed from the first $t$ iterations of the private 
chain alongside the privacy cost $\epsilon,$ which increases with $t.$ The baseline is given 
by a non-private chain after 5000 iterations. The plots 
show the mean and the standard error of the mean over 20 runs.

\begin{figure*}[htb]
\vskip -0.15in
\subfigure[Samples without privacy]
{\label{fig:MoG_figure_nonprivate}
 \includegraphics[width=0.4\textwidth]{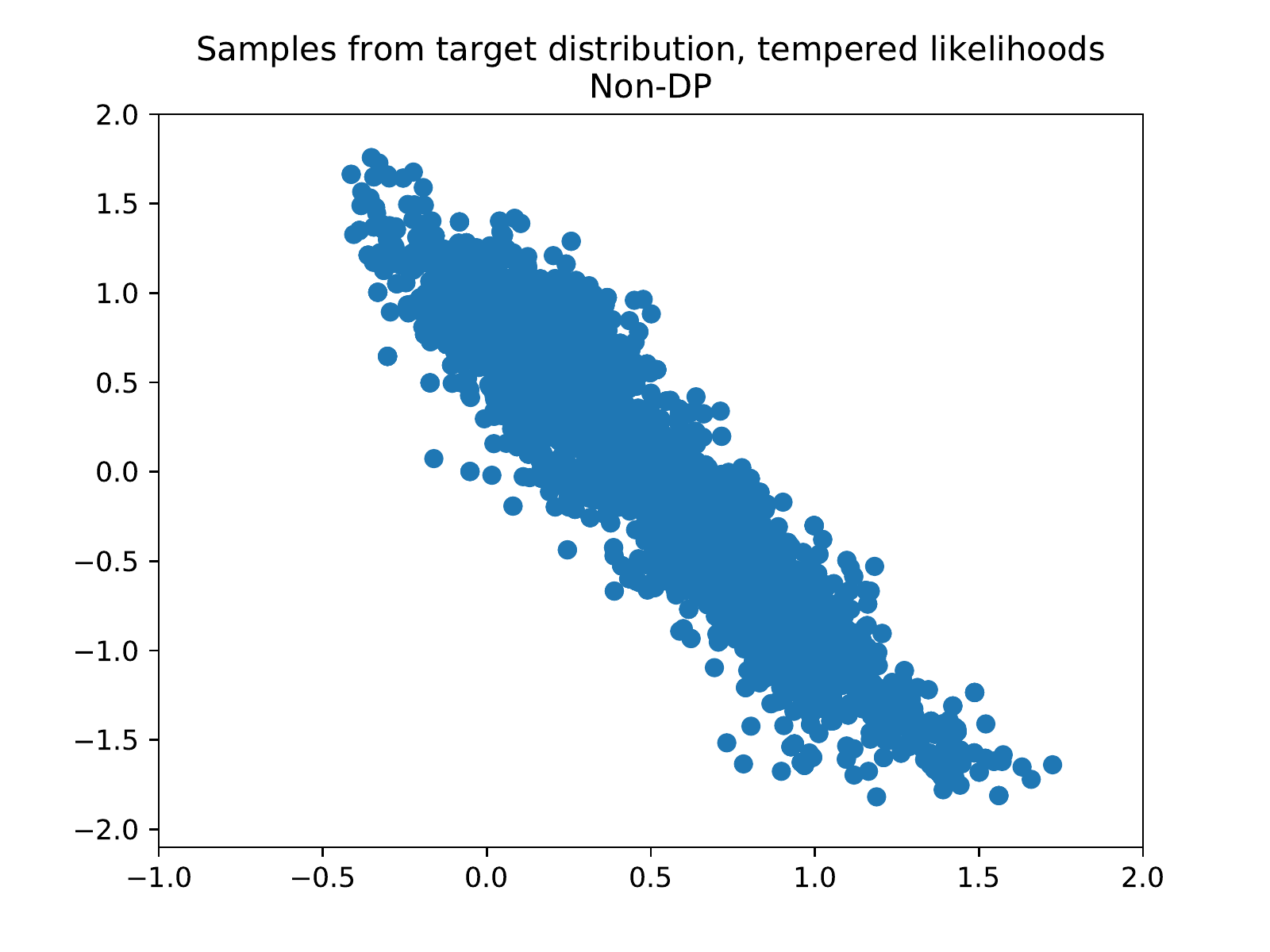}
 }
 \hfill
 \subfigure[Samples with DP]
{\label{fig:MoG_figure_private}
 \includegraphics[width=0.4\textwidth]{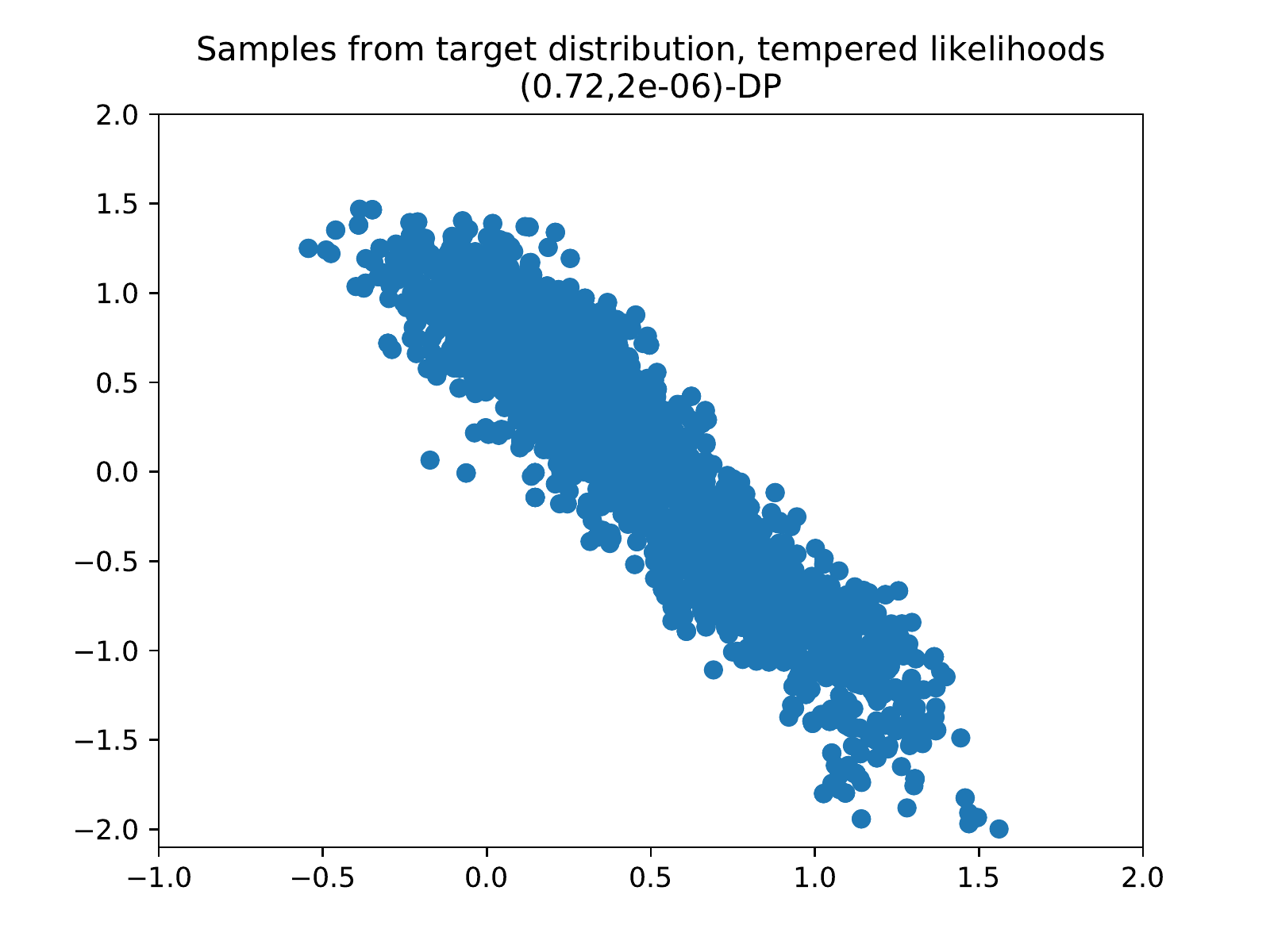} 
 }
  \caption{ Results for the GMM  experiment with tempered likelihoods: \ref{fig:MoG_figure_nonprivate} shows 5000 samples from the chain without privacy 
  and \ref{fig:MoG_figure_private} with privacy. The results with strict privacy are very close to the non-private results.
   }
  \label{fig:MoG_figure}
  \vskip -0.25in
\end{figure*}

\begin{figure*}[h!bt]
\subfigure[Posterior mean accuracy]
{\label{fig:mean_acc}
 \includegraphics[width=0.4\textwidth]{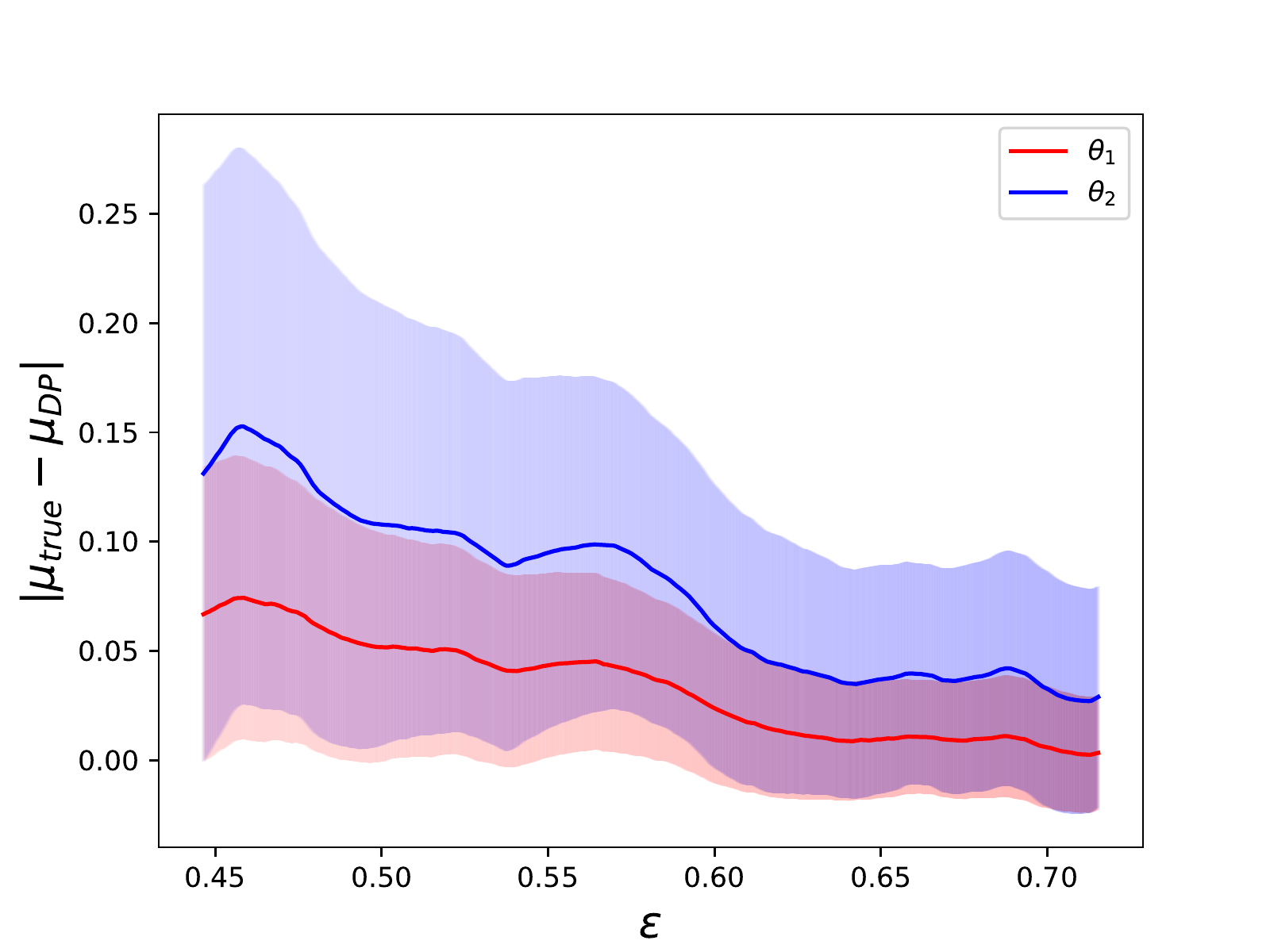}
 }
 \hfill
 \subfigure[Posterior variance accuracy]
{\label{fig:var_acc}
 \includegraphics[width=0.4\textwidth]{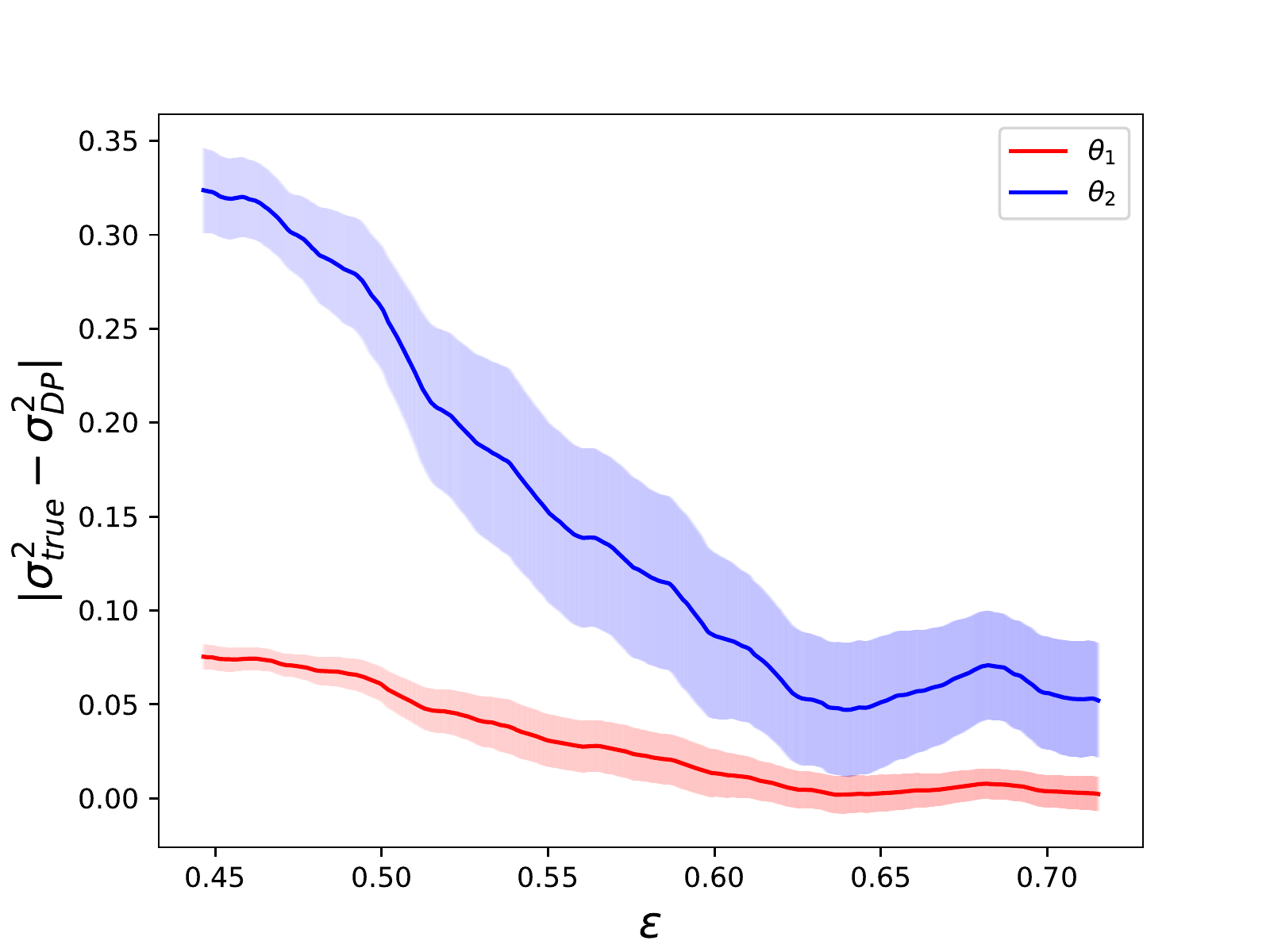}
 }
	\caption{Intermediate private posterior statistics starting from iteration $T = 1000$ compared against the 
	baseline given by a non-private chain after $5000$ iterations. Solid lines show the mean error between 20 runs of the algorithm
	with errorbars illustrating the standard error of the mean between the runs.}
\label{fig:accuracy_figure}
\vskip -0.25in 
\end{figure*}


\section{Related work}

Bayesian posterior sampling under DP has been studied using several
different approaches. \citet{Dimitrakakis2014} note that drawing a
single sample from the posterior distribution of a model where the
log-likelihood is Lipschitz or bounded yields a DP guarantee. The
bound on $\epsilon$ can be strengthened by tempering the posterior by
raising the likelihood to a power $\tau \in (0, 1)$ to obtain the
tempered posterior
\begin{equation}
  \label{eq:tempered_posterior}
  \pi_{\tau}(\theta) \propto p(\theta) p(\data \mid \theta)^{\tau}.
\end{equation}
The same principle is discussed and extended by \citet{Wang2015ICML},
\citet{Zhang2015} and \citet{Dimitrakakis2017} in the classical DP
setting and by \citet{Geumlek_2017} in the RDP setting.
\citet{Wang2015ICML} dub this the ``one posterior sample'' (OPS)
mechanism. The main limitation of all these methods is that the
privacy guarantee is conditional on sampling from the exact posterior,
which is in most realistic cases impossible to verify.

The other most widely used approach for DP Bayesian inference is
perturbation of sufficient statistics of an exponential family model
using the Laplace mechanism. This straightforward application of the
Laplace mechanism was mentioned at least by \citet{Dwork2009} and has
been widely applied since by several authors
\citep[e.g.][]{Zhang2015,Foulds2016,Park_2016,Honkela_2018,Bernstein2018}.
In particular, \citet{Foulds2016} show that the sufficient statistics
perturbation is more efficient than OPS for models where both are
applicable. Furthermore, these methods can provide an unconditional
privacy guarantee. Many of the early methods ignore the Laplace noise
injected for DP in the inference, leading to potentially biased
inference results. This weakness is addressed by
\citet{Bernstein2018}, who include the uncertainty arising from the
injected noise in the modelling, which improves especially the accuracy of
posterior variances for models where this can be done.

MCMC methods that use gradient information such as Hamiltonian Monte
Carlo (HMC) and various stochastic gradient MCMC methods have become
popular recently. DP variants of these were first proposed by
\citet{Wang2015ICML} and later refined by \citet{Li2017} to make use
of the moments accountant \citep{Abadi2016}. The form of the privacy
guarantee for these methods is similar to that of our method: there is
an unconditional guarantee for models with a differentiable Lipschitz
log-likelihood that weakens as more iterations are taken. Because of
the use of the gradients, these methods are limited to differentiable
models and cannot be applied to e.g.\ models with discrete variables.

Before \citet{Seita_2017}, the problem of MCMC without using the full data 
has been considered by many authors 
(see \citealt{Bardenet_2017} for a recent literature survey). 
The methods most closely related to ours are the ones by \citet{Korattikara_2014} and \citet{Bardenet_2014}. 
From our perspective, the main problem with these approaches is the adaptive batch size: 
the algorithms may regularly need to use all observations on a single iteration \citep{Seita_2017}, 
which clashes with privacy amplification. 
\citet{Bardenet_2017} have more recently proposed an improved version 
of their previous technique alleviating the problem, but the batch sizes can 
still be large for privacy amplification.


\section{Discussion}

While gradient-based samplers such as HMC are clearly dominant in the
non-DP case, it is unclear how useful they will be under DP.
Straightforward stochastic gradient methods such as stochastic
gradient Langevin dynamics (SGLD) can be fast in initial convergence
to a high posterior density region, but it is not clear if they can
explore that region more efficiently. HMC does have a clear advantage
at exploration, but \citet{Betancourt2015} clearly demonstrates that
HMC is very sensitive to having very accurate gradients and therefore
a naive DP HMC is unlikely to perform well. We believe that using a
gradient-based method such as DP variational inference
\citep{Jalko_2016} as an initialisation for the proposed method can
yield overall a very efficient sampler that can take advantage of the
gradients in the initial convergence and of MCMC in obtaining accurate
posterior variances. Further work in benchmarking different approaches
over a number of models is needed, but it is beyond the scope of this
work.

The proposed method allows for structurally new kind of assumptions to
guarantee privacy through forcing bounds on the proposal instead of or
in addition to the likelihood. This opens the door for a lot of
optimisation in the design of the proposal. It is not obvious how the
proposal should be selected in order to maximise the amount of useful
information obtained about the posterior under the given privacy
budget, when one has to balance between sampler acceptance rate and
autocorrelation as well as privacy. We leave this interesting question
for future work.


\subsection*{Acknowledgements}

The authors would like to thank Daniel Seita for sharing the original code for their paper. 

This work has been supported by the Academy of Finland
[Finnish Center for Artificial Intelligence FCAI and
grants 294238, 303815, 313124].

\bibliographystyle{abbrvnat}
\bibliography{exactdp}

\clearpage



\appendix

\section*{Supplement}

\section{Useful differential privacy results}

\begin{proposition}
	\label{composition}
	A composition of two RDP algorithms $\mathcal{M}_1$, $\mathcal{M}_2$ with RDP guarantees
	$(\alpha, \epsilon_1)$ and $(\alpha, \epsilon_2)$, is $(\alpha, \epsilon_1+\epsilon_2)$-RDP.
\end{proposition}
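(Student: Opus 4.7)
The plan is to prove this directly from the definitions of Rényi divergence and Rényi DP, by exploiting the factorisation of the joint density of $(\mathcal{M}_1(\data), \mathcal{M}_2(\data))$. I will handle the general adaptive case, where $\mathcal{M}_2$ is allowed to depend on the output of $\mathcal{M}_1$, since this subsumes the non-adaptive one; the standard non-adaptive case then follows immediately.

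First I would set notation: for adjacent datasets $\data, \dataNeigh$, let $p_1, q_1$ be the densities of $\mathcal{M}_1(\data), \mathcal{M}_1(\dataNeigh)$ and let $p_2(\cdot \cond y), q_2(\cdot \cond y)$ be the conditional densities of $\mathcal{M}_2$ applied to $\data$ and $\dataNeigh$ respectively, given that $\mathcal{M}_1$ produced output $y$. The joint density of the composed mechanism then factorises as $p_1(y)\,p_2(z \cond y)$ on $\data$ and $q_1(y)\,q_2(z \cond y)$ on $\dataNeigh$, so the likelihood ratio factorises multiplicatively.

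Next I would plug this factorisation into the definition of Rényi divergence and split the joint expectation into an outer expectation over $y \sim p_1$ and an inner expectation over $z \sim p_2(\cdot\cond y)$:
\begin{equation*}
\mathbb{E}_{(Y,Z)}\!\left[\left(\tfrac{p_1(Y)\,p_2(Z\cond Y)}{q_1(Y)\,q_2(Z\cond Y)}\right)^{\alpha-1}\right]
= \mathbb{E}_{Y \sim p_1}\!\left[\left(\tfrac{p_1(Y)}{q_1(Y)}\right)^{\alpha-1}\!\mathbb{E}_{Z \sim p_2(\cdot\cond Y)}\!\left[\left(\tfrac{p_2(Z\cond Y)}{q_2(Z\cond Y)}\right)^{\alpha-1}\right]\right].
\end{equation*}
By the RDP guarantee of $\mathcal{M}_2$ applied pointwise for each $y$, the inner expectation is at most $e^{(\alpha-1)\epsilon_2}$, a constant independent of $y$. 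Pulling it out of the outer expectation and invoking the RDP guarantee of $\mathcal{M}_1$ bounds the remaining outer expectation by $e^{(\alpha-1)\epsilon_1}$. Taking $\frac{1}{\alpha-1}\log$ of the product $e^{(\alpha-1)(\epsilon_1+\epsilon_2)}$ yields the desired bound $\epsilon_1+\epsilon_2$.

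The main subtlety, rather than the main obstacle, is the adaptive dependence of $\mathcal{M}_2$ on $y$: one has to verify that the RDP guarantee of $\mathcal{M}_2$ holds for every fixed conditioning value $y$ (which is what RDP of an algorithm with auxiliary input is meant to supply), so that the pointwise bound used inside the outer expectation is valid. A mild measure-theoretic check that the factorisation of the joint into a marginal and a conditional is valid (e.g.\ that the relevant densities exist with respect to a common dominating measure) completes the argument; this is standard and the referenced Rényi DP paper of \citet{Mironov_2017} gives essentially the same derivation.
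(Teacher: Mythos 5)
Your proof is correct and is essentially the standard derivation of RDP composition: the paper itself does not reproduce an argument but simply cites \citet{Mironov_2017} (Proposition 1), and your factorisation of the joint likelihood ratio, pointwise bounding of the inner conditional expectation by $e^{(\alpha-1)\epsilon_2}$, and subsequent bounding of the outer expectation by $e^{(\alpha-1)\epsilon_1}$ is exactly the derivation given there. The only implicit assumption worth stating is $\alpha>1$, so that $(\alpha-1)>0$ and the monotonicity of the exponential and logarithm preserves the inequality directions.
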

\begin{proof}
	See {\citet[Proposition 1]{Mironov_2017} }.
\end{proof}
The next result follows immediately from Proposition \ref{composition}.
\begin{corollary}
	\label{cor:T-fold_composition} 
	Releasing a result from a $T$-fold composition of a $(\alpha, \epsilon)$-RDP query is
	$(\alpha, T\epsilon)$-RDP.
\end{corollary}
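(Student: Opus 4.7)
The plan is to prove Corollary \ref{cor:T-fold_composition} by a straightforward induction on $T$, using Proposition \ref{composition} as the inductive engine. The base case $T=1$ is immediate: a single release of an $(\alpha,\epsilon)$-RDP query is by definition $(\alpha,\epsilon)$-RDP, which matches $(\alpha, 1\cdot\epsilon)$-RDP.

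For the inductive step, I would assume that releasing the first $T-1$ outputs jointly is $(\alpha,(T-1)\epsilon)$-RDP, view this joint release as a single mechanism $\mathcal{M}_1$, and let $\mathcal{M}_2$ be the $T$-th query, which is $(\alpha,\epsilon)$-RDP. Applying Proposition \ref{composition} to $\mathcal{M}_1$ and $\mathcal{M}_2$ gives that the joint release of all $T$ outputs is $(\alpha,(T-1)\epsilon+\epsilon)$-RDP, i.e.\ $(\alpha,T\epsilon)$-RDP, closing the induction. Note that Proposition \ref{composition} as stated already allows $\mathcal{M}_2$ to depend adaptively on the output of $\mathcal{M}_1$, so the argument covers the adaptive composition case relevant to iterating an MCMC chain.

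Honestly, there is no real obstacle here: the corollary is a one-line unwinding of Proposition \ref{composition}. The only thing worth being a bit careful about is making clear that the $T$ queries may be adaptive (each can depend on prior outputs), since this is exactly the regime needed later in Corollary \ref{prop:full_chain_rdp} and Corollary \ref{cor:subsampled_dp_budget}; this is handled by treating the first $T-1$ releases as one composite mechanism, so that Proposition \ref{composition} applies verbatim at each induction step.
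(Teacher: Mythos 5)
Your proof is correct and matches the paper's approach: the paper simply states that the corollary follows immediately from Proposition \ref{composition}, and your induction on $T$ (treating the first $T-1$ releases as a single composite mechanism and applying the two-fold composition result) is exactly the standard unwinding of that claim, including the remark about adaptivity.
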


The following Proposition states the privacy amplification via subsampling result of \citet{Wang_2018}.
\begin{proposition}	
	\label{prop:RDP_subsampling}
	A randomised algorithm $\mathcal{M}$ which accesses the whole dataset $\data$ only through 
	subset $\batch$ of the dataset and satisfies $(\alpha, \epsilon)$-RDP w.r.t. to $\batch$,
	is $(\alpha, \epsilon')$-RDP with
	\begin{align*}
		\epsilon' \leq\, \frac{1}{\alpha-1} & \log  
		\Big ( 1+q^2 \binom{\alpha}{2} 
		 \cdot \min \Big \{ 4(e^{\epsilon(2)}-1), e^{\epsilon(2)}
		 \min \Big \{ 2,(e^{\epsilon(\infty)-1})^2\Big \} \Big \} \\
		& + \sum_{j=3}^\alpha q^j {\alpha \choose j} e^{(j-1)\epsilon(j)}\min\left\{2, 
		(e^{\epsilon(\infty)}-1)^j \right\}
		\Big ),
	\end{align*}
	where $q = |\batch|/|\data|,$ and $\alpha \geq 2$ is an integer, and $\epsilon(\infty) = \lim_{j\rightarrow \infty} \epsilon(j).$
\end{proposition}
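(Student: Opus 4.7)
The plan is to follow the strategy introduced by \citet{Wang_2018}: interpret the subsampling step as inducing a mixture on the output distribution, expand the Rényi divergence moment generating function via the binomial theorem, and bound each resulting term using the per-subsample RDP guarantee together with worst-case density-ratio bounds. For neighbouring datasets $\data, \dataNeigh$ differing in a single element $x_N$, conditioning on whether $x_N$ lands in the random subsample $\batch$ yields the mixture representations $\mathcal{M}(\data) = (1-q)P + qP_1$ and $\mathcal{M}(\dataNeigh) = (1-q)P + qP_2$, where $P$ is the common output distribution obtained when $x_N \notin \batch$ and $P_1, P_2$ are the output distributions when $x_N \in \batch$. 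The hypothesis then gives $D_\alpha(P_1 \| P_2) \leq \epsilon(\alpha)$ for every integer $\alpha \geq 2$, and $\|dP_1/dP_2\|_\infty \leq e^{\epsilon(\infty)}$ in the limit.

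Next I would compute the moment generating function by writing
\begin{equation*}
\frac{d\mathcal{M}(\data)}{d\mathcal{M}(\dataNeigh)} = 1 + q\left(\frac{dP_1}{d\mathcal{M}(\dataNeigh)} - \frac{dP_2}{d\mathcal{M}(\dataNeigh)}\right)
\end{equation*}
and applying the binomial theorem inside $\mathbb{E}_{\mathcal{M}(\dataNeigh)}[(d\mathcal{M}(\data)/d\mathcal{M}(\dataNeigh))^\alpha]$. The $j=0$ term contributes $1$, and the $j=1$ term vanishes because $P_1$ and $P_2$ are probability measures, leaving a sum from $j=2$ to $\alpha$ of terms of the form $q^j \binom{\alpha}{j}$ times a centred $j$-th moment. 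Each centred moment can then be bounded in two complementary ways: first, by expanding and applying Hölder's inequality to reduce to quantities of the form $\mathbb{E}_{P_2}[(dP_1/dP_2)^{j-1}] = e^{(j-2)D_{j-1}(P_1\|P_2)}$, which yields the $e^{(j-1)\epsilon(j)}$ factor; second, by bounding the difference of density ratios pointwise, which yields the $(e^{\epsilon(\infty)}-1)^j$ factor. Taking the minimum of these, together with the trivial bound of $2$ whenever it is tighter, produces the sum appearing in the statement.

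The delicate step, which I expect to be the main obstacle, is the $j=2$ term, where the claimed constant is the sharper $4(e^{\epsilon(2)}-1)$ rather than $2e^{\epsilon(2)}$. Obtaining this refinement requires exploiting the cancellation $\mathbb{E}_{\mathcal{M}(\dataNeigh)}[dP_1/d\mathcal{M}(\dataNeigh) - dP_2/d\mathcal{M}(\dataNeigh)] = 0$ to rewrite the second moment as a variance-type quantity, controlling it by $\mathbb{E}_{P_2}[(dP_1/dP_2 - 1)^2] = e^{D_2(P_1\|P_2)} - 1 \leq e^{\epsilon(2)} - 1$, and absorbing a factor of at most $4$ from a triangle-inequality step that symmetrises the centred contribution between $P_1$ and $P_2$. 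Finally, combining the resulting upper bound on the moment generating function with the identity $D_\alpha(\mathcal{M}(\data) \| \mathcal{M}(\dataNeigh)) = \frac{1}{\alpha - 1}\log \mathbb{E}_{\mathcal{M}(\dataNeigh)}[(d\mathcal{M}(\data)/d\mathcal{M}(\dataNeigh))^\alpha]$ converts the moment bound into the claimed $(\alpha, \epsilon')$-RDP statement.
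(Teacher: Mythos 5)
The paper does not actually prove this proposition: the statement is quoted from \citet{Wang_2018} and the ``proof'' given is only a pointer to their Theorem~10. Your outline reconstructs the architecture of that cited proof --- the mixture decomposition obtained by conditioning on whether the differing element enters the subsample, the binomial expansion of $\mathbb{E}\big[(d\mathcal{M}(\data)/d\mathcal{M}(\dataNeigh))^\alpha\big]$, the vanishing first-order term, and the two complementary bounds (R\'enyi-moment versus sup-ratio) on each higher-order term, with a sharper variance-type treatment of $j=2$ --- so you are on the same path as the source the paper relies on.

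Two places where the sketch papers over the real work. First, $P_1$ and $P_2$ are not single mechanism outputs to which the hypothesis applies directly: conditioned on the differing element being in $\batch$, each is a mixture over the choice of the remaining $\batchSize-1$ subsample elements, so neither $D_\alpha(P_1\,\|\,P_2)\le\epsilon(\alpha)$ nor the bound on $\mathbb{E}_{P_2}[(dP_1/dP_2)^j]$ follows immediately from the per-subset RDP guarantee. \citet{Wang_2018} need a joint-convexity lemma for a ternary $|\chi|^j$-type divergence to reduce these mixtures to a worst-case pair of fixed subsets, and the same machinery is what lets one replace the inconvenient denominator $d\mathcal{M}(\dataNeigh)$ that appears in the expanded moments by $dP_2$ (or $dP_0$); this is the technical heart of the proof and is missing from your plan. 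Second, a smaller index slip: $\mathbb{E}_{P_2}[(dP_1/dP_2)^{j-1}]=e^{(j-2)D_{j-1}(P_1\|P_2)}$ cannot produce the factor $e^{(j-1)\epsilon(j)}$; the quantity needed is $\mathbb{E}_{P_2}[(dP_1/dP_2)^{j}]=e^{(j-1)D_{j}(P_1\|P_2)}$, e.g.\ via $|dP_1-dP_2|^j\le (dP_1)^j+(dP_2)^j$ giving the factor $2e^{(j-1)\epsilon(j)}$. Your description of the $j=2$ term is consistent with how $4(e^{\epsilon(2)}-1)$ arises (a triangle inequality through $P_0$ costing a factor $2+2$ on two $\chi^2$-divergences, each equal to $e^{D_2}-1$).
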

\begin{proof}
	See {\citet[Theorem 10]{Wang_2018} }.
\end{proof}

Finally, we can convert RDP privacy guarantees back to $(\epsilon, \delta)$-DP guarantees using the following proposition.
\begin{proposition}
	\label{prop:from_RDP_to_DP}
	An $(\alpha, \epsilon)$-RDP algorithm $\mathcal{M}$ also satisfies $(\epsilon', \delta)$-DP
	for all $0<\delta<1$ with 
	\begin{align}
		\epsilon' = \epsilon + \frac{\log (1/\delta)}{\alpha-1}.
	\end{align}
\end{proposition}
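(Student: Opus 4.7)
The plan is a standard Markov-inequality argument that converts the moment-type bound encoded in Rényi divergence into a tail bound on the privacy loss random variable, which in turn delivers the additive $\delta$ slack characteristic of $(\epsilon', \delta)$-DP.

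First I would fix adjacent datasets $\data, \dataNeigh$, let $p, q$ denote the densities of $\mathcal{M}(\data), \mathcal{M}(\dataNeigh)$ (under the usual absolute-continuity assumption $\mathcal{M}(\data) \ll \mathcal{M}(\dataNeigh)$), and define the privacy loss $L(y) = \log(p(y)/q(y))$ together with the ``bad set'' $B = \{y \in \mechIm : L(y) > \epsilon'\}$. For any measurable $\measSet \subset \mechIm$, I would split
\begin{equation*}
\Pr(\mathcal{M}(\data) \in \measSet) = \Pr(\mathcal{M}(\data) \in \measSet \cap B^c) + \Pr(\mathcal{M}(\data) \in \measSet \cap B).
\end{equation*}
On $B^c$ the likelihood ratio is pointwise at most $e^{\epsilon'}$, so the first term is directly bounded by $e^{\epsilon'} \Pr(\mathcal{M}(\dataNeigh) \in \measSet)$. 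It therefore suffices to show $\Pr(\mathcal{M}(\data) \in B) \leq \delta$ to complete the $(\epsilon', \delta)$-DP bound, since the second term is then at most $\delta$ uniformly in $\measSet$.

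The central step is Markov's inequality applied to the non-negative random variable $\bigl(p(Y)/q(Y)\bigr)^{\alpha-1}$ with $Y \sim \mathcal{M}(\data)$, which gives
\begin{equation*}
\Pr(Y \in B) \leq e^{-(\alpha-1)\epsilon'}\, \mathbb{E}\!\left[\bigl(p(Y)/q(Y)\bigr)^{\alpha-1}\right] \leq e^{(\alpha-1)(\epsilon - \epsilon')},
\end{equation*}
where the second inequality is just the $(\alpha, \epsilon)$-RDP hypothesis unpacked via Definition \ref{renyi_divergence}. Setting the right-hand side equal to $\delta$ and solving for $\epsilon'$ yields exactly $\epsilon' = \epsilon + \log(1/\delta)/(\alpha-1)$, which is the claimed bound.

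I do not anticipate a serious obstacle: the whole argument is a Chernoff-style exchange between a moment bound and a tail bound, and the only mild technicality is the set where $q(y) = 0$, which is either $\mathcal{M}(\data)$-null by absolute continuity or can be folded into $B$ without affecting the estimate. Once the decomposition around $B$ is in place, the rest is a one-line Markov computation and a line of algebra.
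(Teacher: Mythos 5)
Your proof is correct. Note that the paper itself offers no argument here: it simply defers to Proposition 3 of \citet{Mironov_2017}. Your route is a genuine (and self-contained) alternative to the one in that reference. Mironov's original proof uses a probability-preservation inequality obtained from H\"older's inequality, namely $\Pr(\mathcal{M}(\data)\in \measSet) \le \bigl(e^{\epsilon}\Pr(\mathcal{M}(\dataNeigh)\in \measSet)\bigr)^{(\alpha-1)/\alpha}$, followed by a case split on whether this quantity exceeds $\delta$; you instead bound the tail of the privacy loss random variable directly by applying Markov's inequality to $(p(Y)/q(Y))^{\alpha-1}$, which by the paper's Definition~2 is exactly the moment controlled by the RDP hypothesis, so that $\Pr(L(Y)>\epsilon') \le e^{(\alpha-1)(\epsilon-\epsilon')} = \delta$. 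The two arguments yield the identical conversion formula. Yours is arguably more elementary and fits the ``moments accountant'' viewpoint (a Chernoff-type exchange of a moment bound for a tail bound), while Mironov's H\"older route generalizes more readily to sharper conversions that exploit $\Pr(\mathcal{M}(\dataNeigh)\in \measSet)$ multiplicatively. Your handling of the technicalities is also sound: the split over the bad set $B$ gives $\Pr(\mathcal{M}(\data)\in \measSet) \le e^{\epsilon'}\Pr(\mathcal{M}(\dataNeigh)\in \measSet) + \Pr(\mathcal{M}(\data)\in B)$ uniformly in $\measSet$, and the set where $q$ vanishes can indeed be absorbed into $B$ or dismissed by absolute continuity.
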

\begin{proof}
	See {\citet[Proposition 3]{Mironov_2017} }.
\end{proof}


\section{Proof of main text's Theorem \ref{full_data_theorem}}

Denote the maximally different adjacent datasets by 
$\mathbf x_1, \mathbf x_2.$ The mechanism releases a sample from 
$\mathcal{N}_1 = \mathcal N (\D_1, C),$ and $\mathcal{N}_2 = \mathcal N (\D_2, C),$ where 
$\Delta_1, \Delta_2$ are calculated with $\mathbf x_1, \mathbf x_2,$ respectively.

We want to show that 
\begin{align}
D_{\alpha} (\mathcal N_1 || \mathcal N_2) &= \log \frac{\sigma_1}{\sigma_2} + 
	\frac{1}{2(\alpha-1)} \log \frac{\sigma_2^2}{\alpha \sigma_2^2 + (1-\alpha)\sigma_1^2 } + 
	\frac{\alpha}{2} \frac{(\mu_1 - \mu_2)^2}{\alpha \sigma_2^2 + (1-\alpha)\sigma_1^2 } \\
	&\leq \frac{2 \alpha B^2}{C}
\end{align}
assuming that either
	\begin{align}
		\label{smoothness_assumption_sup1}
		|\log p(x_i | \theta')-\log p(x_i | \theta)| < B \ \forall x_i, \theta, \theta'
	\end{align}
	or 
	\begin{align}
		\label{smoothness_assumption_sup2}
		|\log p(x_i | \theta)-\log p(x_j | \theta)|<B, \ \forall x_i, x_j, \theta .
	\end{align}

\begin{proof}

W.l.o.g., we can assume that the differing element between $\mathbf x_1$ and $\mathbf x_2$ is the final one, so 
$x_{1,i} = x_{2,i}, i=1,\dots,N-1$.

Since $\sigma_1^2 = \sigma_2^2 = C,$ we immediately have
\begin{align}
D_{\alpha} (\mathcal N_1 || \mathcal N_2) &= \log \frac{\sigma_1}{\sigma_2} + 
	\frac{1}{2(\alpha-1)} \log \frac{\sigma_2^2}{\alpha \sigma_2^2 + (1-\alpha)\sigma_1^2 } + 
	\frac{\alpha}{2} \frac{(\mu_1 - \mu_2)^2}{\alpha \sigma_2^2 + (1-\alpha)\sigma_1^2 } \\
	\label{eq:full_data_equal_var}
	& = \frac{\alpha}{2 C} (\mu_1 - \mu_2)^2 \\
	& = \frac{\alpha}{2 C} [\sum_{i=1}^N \log \frac{p(x_{1,i}|\theta') }{ p(x_{1,i}|\theta) } 
		- \sum_{i=1}^N \log \frac{p(x_{2,i}|\theta') }{ p(x_{2,i}|\theta) } ]^2 \\
	\label{eq:full_data_interlude}
	& = \frac{\alpha}{2 C} \left | \log \frac{p(x_{1,N}|\theta') }{ p(x_{1,N}|\theta) } 
		- \log \frac{p(x_{2,N}|\theta') }{ p(x_{2,N}|\theta) } \right |^2 .
\end{align}

Assuming \eqref{smoothness_assumption_sup1}, and continuing from \eqref{eq:full_data_interlude} 
\begin{align}
& \frac{\alpha}{2 C} \left | \log \frac{p(x_{1,N}|\theta') }{ p(x_{1,N}|\theta) } 
		- \log \frac{p(x_{2,N}|\theta') }{ p(x_{2,N}|\theta) }  \right |^2 \\
&\leq \frac{\alpha}{2 C} \Big ( \left | \log \frac{p(x_{1,N}|\theta') }{ p(x_{1,N}|\theta) } \right | 
		+ \left | \log \frac{p(x_{2,N}|\theta') }{ p(x_{2,N}|\theta) } \right | \Big )^2 \\		
&\leq \frac{\alpha}{2 C} \left | 2 B \right |^2 \\
&\leq \frac{2 \alpha B^2}{C}.
\end{align}

On the other hand, assuming \eqref{smoothness_assumption_sup2}, and again continuing from \eqref{eq:full_data_interlude} gives 
\begin{align}
& \frac{\alpha}{2 C} \left | \log \frac{p(x_{1,N}|\theta') }{ p(x_{1,N}|\theta) } 
		- \log \frac{p(x_{2,N}|\theta') }{ p(x_{2,N}|\theta) } \right |^2 \\
&= \frac{\alpha}{2 C} \left | \log \frac{p(x_{1,N}|\theta') }{ p(x_{2,N}|\theta') } 
		- \log \frac{p(x_{1,N}|\theta) }{ p(x_{2,N}|\theta) } \right |^2 \\
&\leq \frac{\alpha}{2 C} \Big ( \left | \log \frac{p(x_{1,N}|\theta') }{ p(x_{2,N}|\theta') } \right |
		+ \left | \log \frac{p(x_{1,N}|\theta) }{ p(x_{2,N}|\theta) } \right | \Big )^2 \\
&\leq \frac{\alpha}{2 C} \left | 2 B \right |^2 \\
&\leq \frac{2 \alpha B^2}{C},
\end{align}
which is the same bound as before.

\end{proof}


\section{Proof of main text's Theorem \ref{rdp_bound}}

The Barker test amounts to checking the following condition:
\begin{align}
	\Delta^* &+ V_{nc} + V_{cor}^{(2)} > 0, \, \text{where} \\
	\label{eq:Delta_star}
	\Delta^* &= \frac{N}{b}\sum_{i \in \batch} \underbrace{\log\frac{p(x_i | \theta') }{p(x_i | \theta)}}_{r_i} + 
		\log\frac{q(\theta | \theta') p(\theta)}{q(\theta' | \theta) p(\theta')}, \\
	V_{nc} &\sim \mathcal{N}(0, 2-s_{\Delta^*}^2),
\end{align}
$N$ is the full dataset size, $b$ is the batch size, $s_{\Delta^*}^2$ is the sample variance, and summation over 
$\batch$ here means summing over the elements in the batch, indexed by the element number $i$.

In other words, with a slight abuse of notation and writing capital letters for random variables the mechanism releases a sample from 
\begin{align}
\label{eq:normal_eq_middle}
\mathcal{N} ( N \bar{\rr}, 2-\V{ \frac{N}{b} \sum_{i\in \batch} R_i }  ) =& \mathcal{N} ( N \bar{\rr}, 2 - \frac{N^2}{b^2} \sum_{i\in \batch} \V{ R} ) \\
\label{eq:normal_eq}
\approx & \mathcal{N}(N \bar{\mathbf r}, 2-\frac{N^2}{b}\V{\rr}),
\end{align}
where \eqref{eq:normal_eq_middle} holds because $R_i$ are conditionally iid with a common distribution written as $R$, 
and $\V{ \rr }$ means the sample variance estimated from the actual iid sample $r_i, i \in \batch$ we 
have, i.e., a vector of length $b$.

Assume that 
\begin{align}
\label{eq:supp_proof_ass1}
	|r_{i} | &\leq \frac{\sqrt{b}}{N} \stackrel{\Delta}{=} c, \forall \ i \ \text{ and } \\ 
\label{eq:supp_proof_ass2}
	\alpha &< \frac{b}{5}.
\end{align}

We want to show that 
\begin{align}
\label{eq:renyi_divergence}
	D_\alpha(\mathcal{N}_1 \, || \, \mathcal{N}_2) &= 
	\underbrace{\ln \frac{\sigma_2}{\sigma_1}}_{f_1} +
	\underbrace{\frac{1}{2(\alpha-1)}\ln\frac{\sigma_2^2}{\alpha\sigma_2^2+(1-\alpha)\sigma_1^2}}_{f_2}+
	\underbrace{\frac{\alpha}{2}\frac{(\mu_1-\mu_2)^2}{\alpha\sigma_2^2+(1-\alpha)\sigma_1^2}}_{f_3} \\
	& \leq \frac{5}{2b} + \frac{1}{2(\alpha-1)}  \ln \frac{2b}{b- 5 \alpha}
		+ \frac{2\alpha}{b-5\alpha}.
\end{align}

\begin{proof}

As a first step, we have 
\begin{align}
\label{eq:var_bounding1}
 0 < \V \rr =& \mathbb E(\rr^2) - \mathbb E(\rr)^2 \leq \mathbb E(\rr^2) = 1/b \sum_{i \in \batch} r_i^2 \leq \frac{b}{N^2} \\
\label{eq:var_bounding2}
 & \Rightarrow 2 - \frac{N^2}{b} \V \rr  \in [1,2),
\end{align}
where the last inequality in \eqref{eq:var_bounding1} follows from \eqref{eq:supp_proof_ass1}.

Denote the maximally different adjacent datasets as 
$\rr_{1}, \rr_{2}$ that produce draws from $\mathcal{N}_1$ and 
$\mathcal{N}_2$ respectively, parameterised with means and variances as in  
\eqref{eq:normal_eq}. W.l.o.g., we can assume that the differing element is the final one, so 
we have $r_{1,i} = r_{2,i}, i=1,\dots,b-1$. We write $i \in \batchCommon$ to index a summation over the batch 
omitting the differing element.

The proof proceeds by bounding each of the terms $f_1, f_2, f_3$ in \eqref{eq:renyi_divergence}.

To start with, $f_1$ can be bounded as follows:
\begin{align}
	\label{eq:f1_abs_vals}
	f_1 &= \frac{1}{2} \ln\frac{\sigma_2^2}{\sigma_1^2} 
	\leq \frac{1}{2} | \ln\frac{\sigma_2^2}{\sigma_1^2} | 
	\leq \frac{1}{2} | \sigma_2^2-\sigma_1^2 | \\
	& = \frac{1}{2}  | 2 - \frac{N^2}{b} \V{\rr_2}- ( 2- \frac{N^2}{b} \V{\rr_1}) | \\
	&= \frac{N^2}{2 b} | 1/b \sum_{i \in \batch} r_{1,i}^2 - ( \bar \rr_{1} )^2 -  1/b \sum_{i \in \batch} r_{2,i}^2 + ( \bar \rr_{2} )^2 |   \\
	&= \frac{N^2}{2 b} | 1/b ( r_{1,b}^2  -  r_{2,b}^2 ) + ( 1/b \sum_{i \in \batch} r_{2,i} )^2 - ( 1/b \sum_{i \in \batch} r_{1,i} )^2 |   \\
	&= \frac{N^2}{2 b^2} | ( r_{1,b}^2  -  r_{2,b}^2 ) + 1/b ( r_{2,b}^2 - r_{1,b}^2 
		+ 2( \sum_{i \in \batchCommon} r_{2,i} \cdot r_{2,b} - \sum_{i \in \batchCommon} r_{1,i} \cdot r_{1,b}  )    ) |   \\
	&= \frac{N^2}{2 b^2} | \frac{b-1}{b} ( r_{1,b}^2  -  r_{2,b}^2 ) - \frac{2}{b}  
		( \sum_{i \in \batchCommon} r_{i} ) ( r_{1,b} -  r_{2,b}   )  |   \\
	&= \frac{N^2}{2 b^3} | (b-1) ( r_{1,b}^2  -  r_{2,b}^2 ) - 
		2 ( \sum_{i \in \batchCommon} r_{i} ) ( r_{1,b} -  r_{2,b}   )  |   \\
	\label{eq:s2/s1_upper_bound}
	&\leq \frac{N^2}{2 b^3} [ (b-1) ( c^2 ) + 2 (b-1) c ( 2c ) ] \\
	& = \frac{N^2}{2 b^3} (b-1) 5 c^2 \\
	\label{eq:s2/s1_rough_upper_bound}
	& \leq \frac{5}{2b}, 
\end{align}
where the final inequality in \eqref{eq:f1_abs_vals} holds because we have \eqref{eq:var_bounding2}, 
and \eqref{eq:s2/s1_upper_bound} as well as the final bound in \eqref{eq:s2/s1_rough_upper_bound} 
follow from \eqref{eq:supp_proof_ass1}.

For the common denominator term 
$\alpha\sigma_2^2+(1-\alpha)\sigma_1^2 $ in $f_2$ and $f_3$, we can first repeat essentially 
the previous calculation to get
\begin{align}
	\sigma_2^2 - \sigma_1^2 & \geq - | \sigma_2^2-\sigma_1^2 | \\
	& = \cdots \\ 
	&= - \frac{N^2}{b^3} | (b-1) ( r_{1,b}^2  -  r_{2,b}^2 ) - 2 ( \sum_{i \in \batchCommon} r_{i} ) ( r_{1,b} -  r_{2,b}   )  |   \\
	& \geq - \frac{N^2}{b^3} [(b-1) c^2 + 2(b-1)c(2c)]  \\
	& = - \frac{N^2}{b^3} (b-1) 5 c^2 \\
	\label{eq:s2/s1_rough_lower_bound}
	& \geq - \frac{5}{b}.
\end{align}

Combining \eqref{eq:s2/s1_rough_lower_bound} and \eqref{eq:var_bounding2} we get
\begin{align}
\alpha\sigma_2^2+(1-\alpha)\sigma_1^2 &= \sigma_1^2 + \alpha(\sigma_2^2 - \sigma_1^2) \\
\label{eq:denominator_rough_bound}
& \geq 1 - \alpha \frac{5}{b} > 0,
\end{align}
where the final inequality follows from \eqref{eq:supp_proof_ass2}.

For the numerator in $f_3$ we have
\begin{align}
	(\mu_1-\mu_2)^2 &= \left(\frac{N}{b}\sum_{i \in \batch} r_{1,i}- \frac{N}{b}\sum_{i \in \batch} r_{2,i} \right)^2 \\
	&= \left( \frac{N}{b} ( r_{1,b} - r_{2,b} ) \right)^2 \\
	\label{eq:f3_nominator_bound}
	& \leq \left( \frac{2N c}{b} \right)^2 \\
	\label{eq:f3_nominator_rough_bound}
	& \leq \frac{4}{b}.
\end{align}

Finally, using the derived bounds in \eqref{eq:s2/s1_rough_upper_bound}, \eqref{eq:denominator_rough_bound}, 
and \eqref{eq:f3_nominator_rough_bound} with the fact that $\sigma_2^2\leq 2$ from \eqref{eq:var_bounding2}, 
the bound for the R\'enyi divergence \eqref{eq:renyi_divergence} becomes
\begin{align}
	D_\alpha(\mathcal{N}_1 \, || \, \mathcal{N}_2) &\leq \frac{5}{2b} + \frac{1}{2(\alpha-1)} ( \ln 2 - \ln(1 - \frac{5 \alpha}{b}) ) + \frac{\alpha}{2} \frac{4}{b} \frac{1}{1-\frac{5 \alpha}{b}} \\
	\label{eq:full_rough_bound}
	& \leq \frac{5}{2b} + \frac{1}{2(\alpha-1)}  \ln \frac{2b}{b- 5 \alpha}
		+ \frac{2\alpha}{b-5\alpha}.
\end{align}

If we instead use the tempered log-likelihoods with temperature $\tau=\frac{N_0}{N}$, the effect is to replace $r_i$ by $\tau r_i.$ 
The same proof then holds when instead of $N$ we write $N_0.$

\end{proof}


\section{Bounding the approximations errors}

As mentioned in the main text, with finite data and $b<N$ the acceptance test \eqref{eq:batch_data_log_test} in the main text is an approximation. 
For this case, there are some known theoretical bounds for the errors induced. The general idea with the following Theorems is that 
by bounding the errors induced by each approximation step, we can 
find a bound on the error in the stationary distribution of the approximate chain w.r.t. the exact posterior. 
The references in this Section mostly point to the main text. The exceptions are obvious from the context.

First, Theorem \ref{thm:Seita_cor_1} gives an upper bound 
for the error due to $\Delta^*$ having approximately normal instead of exactly normal distribution 
as in \eqref{eq:Delta_star}:
\begin{theorem}
\label{thm:Seita_cor_1}
\begin{equation*}
\sup_y | \mathbb P(\Delta^* < y) - \Phi ( \frac{ y - \Delta}{s_{\Delta^*}} ) | \leq \frac{ 6.4 \mathbb E [|Z|^3] + 2 \mathbb E [| Z |]}{\sqrt b},
\end{equation*}
where $Z = N ( \log \frac{p(X|\theta')}{p(X|\theta)} - \mathbb E[ \log \frac{p(X|\theta')}{p(X|\theta)} ]) .$
\end{theorem}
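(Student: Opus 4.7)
The plan is to recognise this as a Berry--Esseen (studentised CLT) bound for the subsample mean. Writing
\begin{equation*}
\Delta^\ast - \Delta \;=\; \frac{1}{b}\sum_{i\in S} Z_i,
\end{equation*}
where the $Z_i$ are iid copies of $Z = N(R - \mathbb E R)$ with $R = \log(p(X\mid\theta')/p(X\mid\theta))$ and hence mean zero, $s_{\Delta^\ast}$ becomes (up to the $1/b$ scaling) the sample standard deviation of the $Z_i$. So the quantity on the left is the Kolmogorov distance between the law of the studentised mean $(\Delta^\ast - \Delta)/s_{\Delta^\ast}$ and $\mathcal N(0,1)$.

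A direct derivation would proceed in two conceptual steps. First, apply the classical Berry--Esseen theorem to the standardised sum $\sum Z_i/(\sigma_Z\sqrt b)$, which contributes an error of order $\mathbb E|Z|^3/(\sigma_Z^3\sqrt b)$ and corresponds to the $6.4\,\mathbb E|Z|^3/\sqrt b$ part of the bound. Second, account for the replacement of the population standard deviation $\sigma_Z/\sqrt b$ by the random sample standard deviation $s_{\Delta^\ast}$; this would be handled via a concentration argument on $s_{\Delta^\ast}^2 - \mathrm{Var}(\Delta^\ast)$, together with the fact that $\Phi$ has bounded derivative, producing a first-moment remainder of order $\mathbb E|Z|/\sqrt b$. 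Combined, the two contributions assemble into exactly the stated shape $(c_1\mathbb E|Z|^3 + c_2 \mathbb E|Z|)/\sqrt b$.

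The main obstacle is pinning down the explicit constants $6.4$ and $2$: these are not the sharp Berry--Esseen constants (which are well below one) and they do not pop out of the two-step sketch above without committing to a particular self-normalised inequality, e.g. in the tradition of Bentkus--Götze, Tyurin, or Shao. Rather than rederiving them, I would cite the specific studentised Berry--Esseen bound that Seita et al.\ invoke and structure the argument as: \emph{(i)} verify the iid, mean-zero, finite third-moment hypotheses on $Z$; \emph{(ii)} apply the cited inequality; \emph{(iii)} translate the conclusion back into the notation $(\Delta^\ast, \Delta, s_{\Delta^\ast})$. The genuine mathematical content is the identification of the correct summand $Z$, not the analytic machinery behind the numerical constants.
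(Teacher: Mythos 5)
Your proposal is correct and matches the paper's treatment: the paper proves this theorem purely by citation to Corollary 1 of Seita et al.\ (2017), which is itself an application of a studentised (self-normalised) Berry--Esseen inequality, exactly as you identify. Your reduction of $\Delta^* - \Delta$ to a mean of iid copies of $Z$ and your decision to defer the explicit constants $6.4$ and $2$ to the cited self-normalised bound rather than rederive them is precisely what the paper does.
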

\begin{proof}
See {\citep[Cor. 1]{Seita_2017} }.
\end{proof}

Next, we have a bound for the error in the test quantity \eqref{eq:batch_data_log_test} 
relative to the exact test \eqref{eq:full_data_log_test} given in Theorem \ref{thm:Seita_cor_2}. The 
original proof {\citep[Cor. 2]{Seita_2017} } assumes that $C=1$ and \eqref{eq:V_cor_distribution} holds exactly. 
We present a slightly modified proof that holds for any $C$ and also accounts for the error due to having only an approximate correction to the 
logistic distribution. We start with a helpful Lemma before the actual modified Theorem.
\begin{lemma}
\label{lemma:Seita_convolution}
Let $P(x)$ and $Q(x)$ be two CDFs satisfying $\sup_x |P(x) - Q(x)| \leq \epsilon$ with $x$ in some real range. Let $R(y)$ be the density of another random variable $Y$. 
Let $P'$ be the convolution $P * R$ and $Q'$ be the convolution $Q * R$. Then $P'(z)$ (resp. $Q'(z)$) is the CDF of sum $Z = X+ Y$ of independent random variables $X$ with CDF $P(x)$ (resp. $Q(x)$) and $Y$ with density $R(y)$. Then $$\sup_x |P'(x) - Q'(x)| \leq \epsilon .$$
\end{lemma}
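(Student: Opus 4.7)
The plan is to prove this by writing out the convolutions explicitly and invoking the triangle inequality inside the integral. The observation that makes this work cleanly is that $R$ is a probability density, so it is nonnegative and integrates to $1$, which lets us treat the convolution as a weighted average of shifted CDF values.

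First I would write the convolution representation of the CDFs of the sum $Z = X + Y$. Conditioning on $Y = y$ (and using independence) gives
\begin{equation}
P'(z) = \int_{-\infty}^{\infty} P(z - y)\, R(y)\, dy, \qquad Q'(z) = \int_{-\infty}^{\infty} Q(z - y)\, R(y)\, dy.
\end{equation}
Subtracting these and pulling the absolute value inside the integral yields
\begin{equation}
|P'(z) - Q'(z)| \leq \int_{-\infty}^{\infty} |P(z - y) - Q(z - y)|\, R(y)\, dy.
\end{equation}

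Next I would apply the hypothesis $\sup_x |P(x) - Q(x)| \leq \epsilon$ pointwise under the integral. Since the integrand $R(y) \geq 0$, this bound factors out:
\begin{equation}
|P'(z) - Q'(z)| \leq \epsilon \int_{-\infty}^{\infty} R(y)\, dy = \epsilon,
\end{equation}
where the last equality uses that $R$ is a probability density. Taking the supremum over $z$ gives the claim.

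There is no substantive obstacle here; the only things to be careful about are (i) justifying that the CDF of the sum of independent random variables equals the convolution of the CDF with the density of the other summand (a standard conditioning argument), and (ii) moving the absolute value through the integral via the triangle inequality, which is legitimate because all quantities involved are well-defined and the integrand is dominated by $R(y)$ (hence integrable). The proof is thus essentially a one-line calculation once the convolution formula is written down.
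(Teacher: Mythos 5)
Your proof is correct. The paper itself does not prove this lemma but simply cites Lemma~4 of Seita et al.\ (2017); your argument --- writing $P'(z)=\int P(z-y)R(y)\,dy$ via conditioning on $Y=y$, moving the absolute value inside the integral, and using that $R$ is a nonnegative density integrating to one --- is the standard proof and is exactly what the cited reference does. The only caveat worth a passing remark is the phrase ``with $x$ in some real range'' in the statement: your pointwise bound $|P(z-y)-Q(z-y)|\leq\epsilon$ requires the hypothesis to hold at every shifted argument $z-y$ appearing in the integral, which is automatic when the sup is over all of $\mathbb{R}$ (as it is in the paper's application).
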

\begin{proof}
See {\citep[Lemma 4]{Seita_2017} }.
\end{proof}

\begin{theorem}
\label{thm:Seita_cor_2}
If $\sup_y | \mathbb P(\Delta^* < y) - \Phi ( \frac{ y - \Delta}{s_{\Delta^*}} ) | \leq \epsilon_1(\theta', \theta, b) $ 
and $\sup_y | S' ( y ) - S( y ) | \leq \epsilon_2,$ 
then $ \sup_y | \mathbb P(\Delta^* + V_{nc} + V_{cor}^{(C)} < y) - S ( y - \Delta ) | \leq \epsilon_1(\theta', \theta, b) + \epsilon_2,$
where $s_{\Delta^*}$ is the sample standard deviation of $\Delta^*,$ $S'$ is the cdf of the approximate logistic distribution 
produced by $\mathcal N (0, C) + V_{cor}^{(C)},$ and $S$ is the exact logistic function.
\end{theorem}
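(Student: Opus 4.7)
The plan is to chain two applications of Lemma \ref{lemma:Seita_convolution} with a single use of the triangle inequality, marching $\Delta^*$ through the two independent noise additions that appear in the test.

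First, I would read the hypothesis $\sup_y | \mathbb P(\Delta^* < y) - \Phi ( ( y - \Delta)/s_{\Delta^*} ) | \leq \epsilon_1(\theta', \theta, b)$ as saying that $\Delta^*$ and $\mathcal N(\Delta, s_{\Delta^*}^2)$ have CDFs that agree to within $\epsilon_1$ uniformly in $y$. I would then convolve both with the density of $V_{nc} \sim \mathcal N(0, C - s_{\Delta^*}^2)$. Since $V_{nc}$ is independent of $\Delta^*$ (and of the Gaussian on the ideal side), Lemma \ref{lemma:Seita_convolution} preserves the uniform CDF bound: the CDF of $\Delta^* + V_{nc}$ is within $\epsilon_1$ of the CDF of $\mathcal N(\Delta, s_{\Delta^*}^2) + \mathcal N(0, C - s_{\Delta^*}^2) = \mathcal N(\Delta, C)$.

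Next, I would convolve once more with the density of $V_{cor}^{(C)}$, which is again independent of everything so far. Lemma \ref{lemma:Seita_convolution} applies a second time and the uniform CDF error remains at most $\epsilon_1$. On the noisy side this produces the CDF of $\Delta^* + V_{nc} + V_{cor}^{(C)}$, which is exactly the test quantity. On the ideal side it produces the CDF of $\mathcal N(\Delta, C) + V_{cor}^{(C)}$, whose value at $y$ equals $S'(y - \Delta)$, since by definition $S'$ is the CDF of the shift-free version $\mathcal N(0, C) + V_{cor}^{(C)}$.

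Finally, the second hypothesis gives $|S'(y - \Delta) - S(y - \Delta)| \leq \epsilon_2$ for every $y$. Combining the two bounds via the triangle inequality yields
\begin{align*}
\bigl| \mathbb P(\Delta^* + V_{nc} + V_{cor}^{(C)} < y) - S(y - \Delta) \bigr|
&\leq \bigl| \mathbb P(\Delta^* + V_{nc} + V_{cor}^{(C)} < y) - S'(y - \Delta) \bigr| \\
&\quad + \bigl| S'(y - \Delta) - S(y - \Delta) \bigr| \\
&\leq \epsilon_1 + \epsilon_2 ,
\end{align*}
and taking the supremum over $y$ finishes the argument. The only subtle bookkeeping is making sure the variances add to exactly $C$ so that the ideal Gaussian matches the one implicit in the definition of $S'$, and that the location shift by $\Delta$ propagates through both convolutions; neither step requires a real estimate, so there is no serious obstacle beyond checking that the two convolutions are with independent summands.
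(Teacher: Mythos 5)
Your proof is correct and follows essentially the same route as the paper's: two successive applications of Lemma \ref{lemma:Seita_convolution} (first convolving with $V_{nc}$ to upgrade the ideal side to $\mathcal N(\Delta, C)$, then with $V_{cor}^{(C)}$ to reach $S'(y-\Delta)$), followed by a triangle inequality against the second hypothesis. No gaps.
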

\begin{proof}
As in the original proof {\citep[Cor. 2]{Seita_2017} } the main idea is to use Lemma \ref{lemma:Seita_convolution} two times. 
First, take $P(y) = \mathbb P ( \Delta^* < y ), Q(y) = \Phi ( \frac{y-\Delta}{s_{\Delta^*}} )$ and convolve 
with $V_{nc}$ which has density $\phi (\frac{x}{\sqrt{C - s_{\Delta^*}^2 })}).$ For the second step, take the results
$P'(y) = \mathbb P ( \Delta^* + V_{nc} < y ), Q'(y) = \Phi ( \frac{y-\Delta}{\sqrt C}) $ and convolve 
with the density of $V_{cor}^{(C)}$ to get $P''(y) = \mathbb P ( \Delta^* + V_{nc} + V_{cor}^{(C)} < y ), Q''(y) = S' ( y-\Delta).$ 
By Lemma \ref{lemma:Seita_convolution}, both convolutions preserve the error bound $\epsilon_1(\theta',\theta,b),$ 
and we therefore have 
\begin{align}
& \sup_y | \mathbb P(\Delta^* + V_{nc} + V_{cor}^{(C)} < y) - S ( y - \Delta ) | \\
 & = \sup_y | \mathbb P(\Delta^* + V_{nc} + V_{cor}^{(C)} < y) - S'(y-\Delta) + S'(y-\Delta) - S ( y - \Delta ) | \\
 \label{eq:triangle_ie}
 & \leq \sup_y | \mathbb P(\Delta^* + V_{nc} + V_{cor}^{(C)} < y) - S'(y-\Delta)|+ \sup_y | S' (y) - S ( y ) | \\
& \leq \epsilon_1(\theta',\theta,b) + \epsilon_2,
\end{align}
where \eqref{eq:triangle_ie} follows from the triangle inequality.
\end{proof}

Finally, a bound on the test error implies a bound 
for the stationary distribution of the Markov chain relative to the true posterior, given in 
Theorem \ref{thm:Korattikara_thm_1}. 
Writing $d_v (P,Q)$ for the total variation distance between distributions $P$ and $Q$, 
$\mathcal T_0$ for the transition kernel of the exact Markov chain, $\mathcal S_0$ 
for the exact posterior, and $\mathcal S_{\epsilon}$ for the stationary distribution of the approximate 
transition kernel where $\epsilon$ is the error in the acceptance test, we have:
\begin{theorem}
\label{thm:Korattikara_thm_1}
If $\mathcal T_0$ satisfies the contraction condition $d_v (P \mathcal T_0, \mathcal S_0 ) < \eta d_v (P, \mathcal S_0 )$ 
for some constant $\eta \in [0,1)$ and all probability distributions $P$, then 
\begin{equation*}
d_v ( \mathcal S_0, \mathcal S_{\epsilon} ) \leq \frac{\epsilon}{1-\eta},
\end{equation*}
where $\epsilon$ is the bound on the error in the acceptance test.
\end{theorem}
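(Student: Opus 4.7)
The plan is to run the standard Korattikara-style triangle-inequality argument comparing $\mathcal{S}_0$ and $\mathcal{S}_\epsilon$ through a single application of each kernel. Let $\mathcal{T}_\epsilon$ denote the approximate transition kernel induced by the noisy acceptance test, so that $\mathcal{S}_\epsilon \mathcal{T}_\epsilon = \mathcal{S}_\epsilon$ and $\mathcal{S}_0 \mathcal{T}_0 = \mathcal{S}_0$.

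The first step is a pointwise kernel comparison: for every state $\theta$ I would show
\begin{equation*}
d_v(\delta_\theta \mathcal{T}_0, \delta_\theta \mathcal{T}_\epsilon) \leq \epsilon.
\end{equation*}
Both kernels share the same proposal $q(\cdot\mid \theta)$ and differ only in whether the accept/reject step is done with the exact or the approximate Barker test. Since those two tests disagree with probability at most $\epsilon$ (that is the meaning of the acceptance-test error bound handed in by Theorem \ref{thm:Seita_cor_2}), I can couple the two chains so that they take the same proposal and the same accept/reject outcome except on an event of probability $\leq \epsilon$, which directly yields the desired TV bound. Convexity of total variation in the starting distribution then gives $d_v(P\mathcal{T}_0, P\mathcal{T}_\epsilon) \leq \epsilon$ for every probability measure $P$.

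The second step uses this with $P = \mathcal{S}_\epsilon$ and the triangle inequality:
\begin{equation*}
d_v(\mathcal{S}_0, \mathcal{S}_\epsilon) = d_v(\mathcal{S}_0, \mathcal{S}_\epsilon \mathcal{T}_\epsilon) \leq d_v(\mathcal{S}_0, \mathcal{S}_\epsilon \mathcal{T}_0) + d_v(\mathcal{S}_\epsilon \mathcal{T}_0, \mathcal{S}_\epsilon \mathcal{T}_\epsilon).
\end{equation*}
The second summand is at most $\epsilon$ by step one. For the first summand, rewrite $\mathcal{S}_0 = \mathcal{S}_0 \mathcal{T}_0$ and apply the contraction hypothesis with $P = \mathcal{S}_\epsilon$ to get $d_v(\mathcal{S}_0, \mathcal{S}_\epsilon \mathcal{T}_0) = d_v(\mathcal{S}_\epsilon \mathcal{T}_0, \mathcal{S}_0) \leq \eta\, d_v(\mathcal{S}_\epsilon, \mathcal{S}_0)$. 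Combining and rearranging gives $(1-\eta)\, d_v(\mathcal{S}_0, \mathcal{S}_\epsilon) \leq \epsilon$, which is the claim because $\eta \in [0,1)$.

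The main obstacle is the clean justification of step one: one has to argue, without getting tangled in measure-theoretic detail, that a pointwise disagreement of probability $\leq \epsilon$ in the accept/reject decision translates into the same bound in total variation on the one-step marginal of the chain, and that this bound survives averaging over $P$. A maximal coupling of the accept/reject Bernoullis, conditioned on a common proposal draw from $q(\cdot\mid\theta)$, is the cleanest way to package this; everything after that is just the triangle inequality and the contraction hypothesis.
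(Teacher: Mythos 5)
Your argument is correct and is essentially the proof of the cited result: the paper itself does not prove this theorem but refers to Theorem 1 of Korattikara et al.\ (2014), whose proof is exactly your combination of (i) a uniform one-step kernel bound $d_v(P\mathcal T_0, P\mathcal T_\epsilon)\le\epsilon$ obtained from the acceptance-test error, and (ii) the triangle inequality plus the contraction hypothesis applied at $P=\mathcal S_\epsilon$. The coupling packaging of step one is fine, with the understood caveat that $\epsilon$ must bound the acceptance-probability discrepancy uniformly over $(\theta,\theta')$, which is what the theorem's phrase ``the bound on the error in the acceptance test'' asserts.
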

\begin{proof}
See {\citep[Theorem 1]{Korattikara_2014} }.
\end{proof}

Generally, especially the contraction condition in Theorem \ref{thm:Korattikara_thm_1} 
can be hard to meet: it can be shown to hold e.g. for some Gibbs samplers (see e.g. \citealt[Theorem 6.1]{Bremaud_1999})
but it is not usually valid for an arbitrary model, and even checking the condition might not be trivial.


\section{Numerical approximation of the correction distribution}

As noted in the main text, we need to find an approximate distribution $V_{cor}^{(C)}$ s.t. 
\begin{equation}
V_{log} \stackrel{d}{=} \mathcal N (0,C) + V_{cor}^{(C)}, 
\end{equation}
where $V_{log}$ has a standard logistic distribution. The approximation method of \citet{Seita_2017} casts 
the problem into a ridge regression problem, which can be solved effectively. However, 
nothing constrains the resulting function from having negative values. In order to use it as an approximate pdf, 
\citet{Seita_2017} set these to zeroes and note that as long as $C$ is small enough, such values are rare and 
hence do not affect the solution much. In practice, their solution seems to work very well with 
small values of $C$, e.g. when $C \leq 1.$

Since we want to use larger $C$ for 
the privacy, we propose to approximate $V_{cor}^{(C)}$ with a Gaussian mixture 
model (GMM). Since the result is always a valid pdf, the problem of negative values does not arise.

To find the correction pdf, denote the density of the GMM approximation with $K$ components by 
$\tilde f_{cor}$, the GMM component parameters by $\pi_k, \mu_k$ and $\sigma_k,$
and the standard normal density by $\phi.$ We have 
\begin{align*}
f_{log}(x) &= (f_{norm} * f_{cor}) (x) \simeq (f_{norm} * \tilde f_{cor})(x) \\
& = \int_{\mathbb R} f_{norm}(x) \tilde f_{cor} (x-t) dt \\
& = \int_{\mathbb R} \phi(\frac{x}{\sqrt{C} }) [\sum_{k=1}^K \pi_k \phi (\frac{x-t-\mu_k}{\sigma_k}) ] dt \\
& = \sum_{k=1}^K \pi_k  \phi (\frac{x-\mu_k}{\sqrt{ C+\sigma_k^2} }) 
= \tilde f_{log}^{(C)}(x; \pi_k, \mu_k, \sigma_k, k=1,\dots, K)
\end{align*}

As the logistic pdf is symmetric around zero, 
we require our GMM approximation to be symmetric as well. We achieve this by
creating a counterpart for each mixture component with an opposite sign mean and identical
variance and weight.
To construct the approximation on some interval $[-a,a] \subset \mathbb R$, we discretise 
the interval into $n$ points, and fit the GMM by minimising the loss function
\begin{equation}
\mathcal L (\pi, \mu, \sigma) = \| f_{log} - \tilde f_{log}^{(C)} \|_2
\end{equation}
calculated over the discretisation. 
Since GMM is a generative model, sampling from the optimised approximation is easy.

Figure \ref{fig:x_cor_approx_errors} shows the approximation error $\max_y | S'(y)-S(y) |,$ 
where $S'$ is the approximate logistic ecdf and $S$ the exact logistic cdf, due to $\tilde V_{cor}$ 
using the ridge regression solution proposed by \citet{Seita_2017} and the GMM. 
The error measure is the same as in Theorem \ref{thm:Seita_cor_2} in the Supplement. 
Empirically, as shown in the Figure, we can have noticeably better approximation especially with larger $C$ values. 

\begin{figure}[htb]
\begin{center}
\centerline{\includegraphics[width=.5\columnwidth]{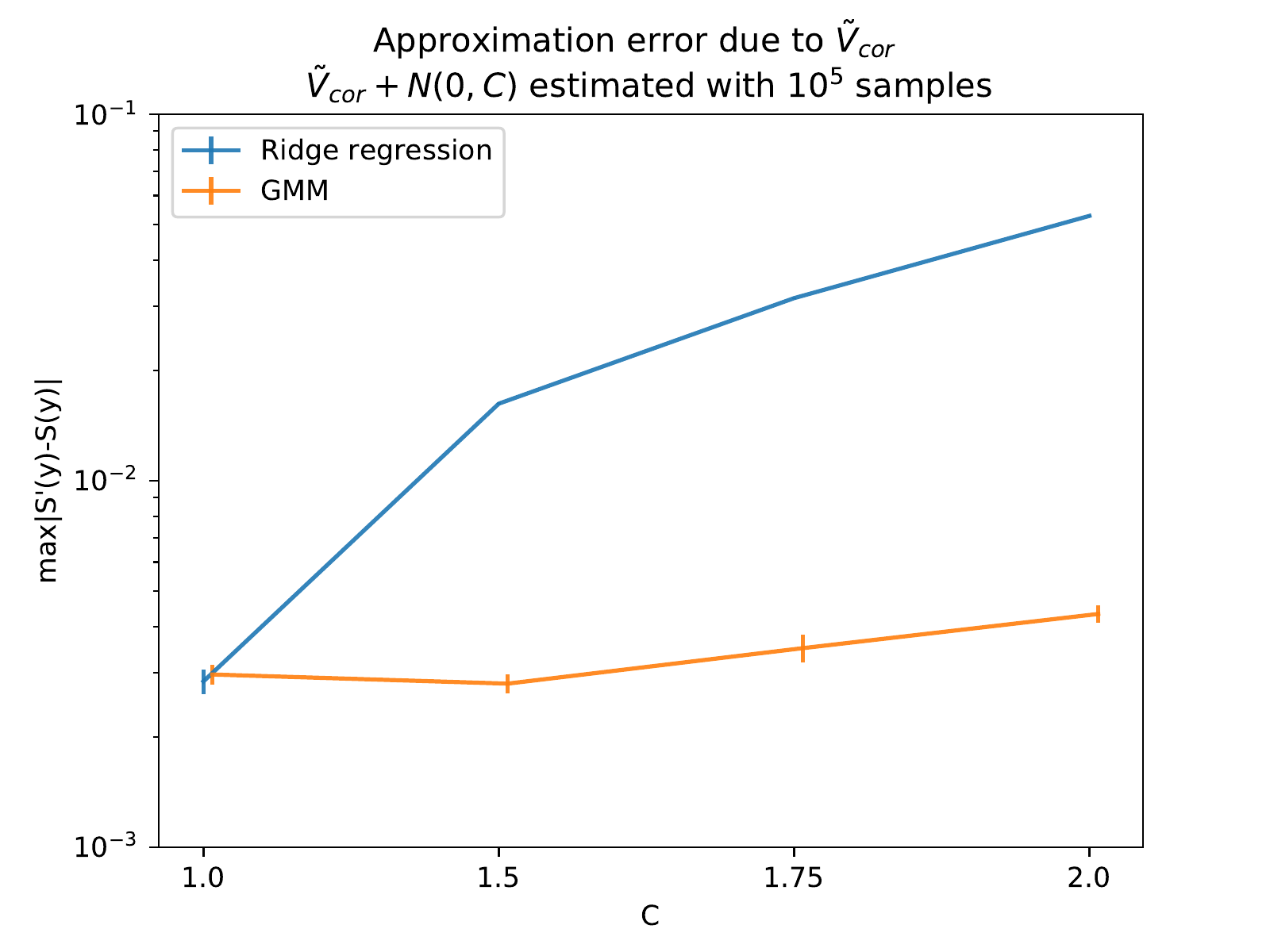}}
\caption{
Approximation error due to $\tilde V_{cor}$ with error bars showing the standard error of the mean 
calculated from 20 runs. With the ridge regression solution proposed by \citet{Seita_2017} the error 
increases quickly when $C>1.$ Using the GMM approximation we can achieve significantly smaller 
error with $C=2.$
}
\label{fig:x_cor_approx_errors}
\end{center}
\vskip -0.2in
\end{figure}

Figure \ref{fig:approximate_pdfs_figure} shows the two approximations with increasing $C.$ When the negative values 
in the ridge regression solution are projected to zeroes, the variance of $V_{cor}$ increases and the resulting approximate $\tilde V_{log}$ 
has variance much larger than the actual $\pi^2/3$ it should have. This also shows in the resulting approximation. 
Figure \ref{fig:approximate_cdfs_figure} shows the empirical cdf for both approximations and for the true logistic distribution, and 
the absolute distance between the approximations $S'$ and the true logistic cdf $S$.

\begin{figure*}[tb]
\subfigure[Ridge regression results]
{\label{fig:seita_approximate_corr_pdfs}
 \includegraphics[width=0.55\textwidth,trim=0mm 0mm 0mm 0mm,clip]{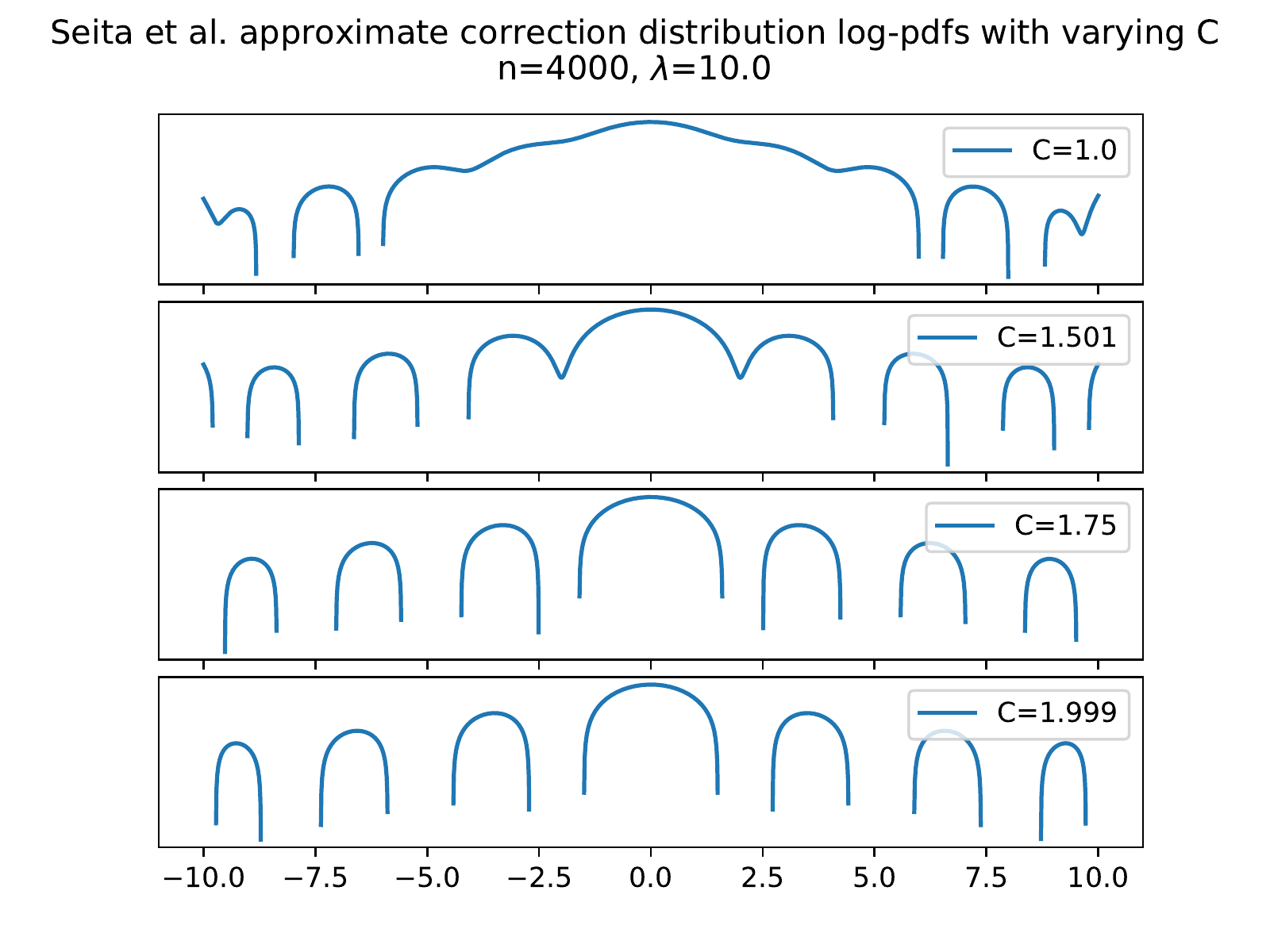}
 }
 \subfigure[GMM results]
{\label{fig:GMM_approximate_corr_pdfs}
 \includegraphics[width=0.55\textwidth,trim=0mm 0mm 0mm 0mm,clip]{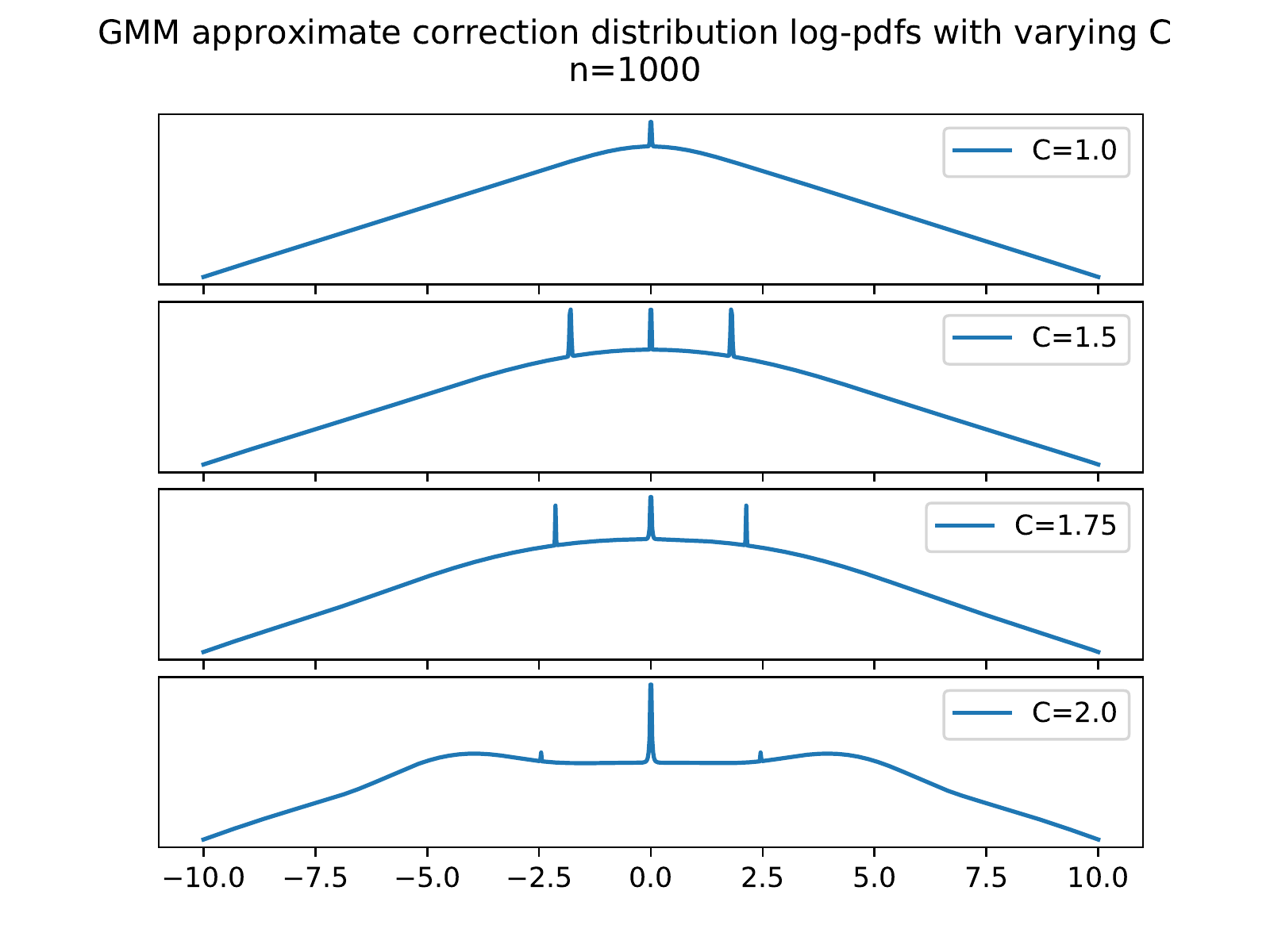} 
 }
  \caption{
  Approximate correction distribution log-densities with varying $C$ values. 
  Figure \ref{fig:seita_approximate_corr_pdfs} shows the results for the ridge regression solution 
  used by Seita et al.: as $C$ increases, the amount of negative values that are projected to zeroes, 
  which show as gaps in the log-pdf, increases markedly.
  Figure \ref{fig:GMM_approximate_corr_pdfs} shows corresponding results for our GMM solution: 
  the approximation is always a valid pdf over $\mathbb R$. 
  }
  \label{fig:approximate_pdfs_figure}
\end{figure*}

\begin{figure*}[tb]
\subfigure[Approximation ecdf and true logistic cdf]
{\label{fig:approximate_cdfs_figure1}
 \includegraphics[width=0.55\textwidth,trim=0mm 0mm 0mm 0mm,clip]{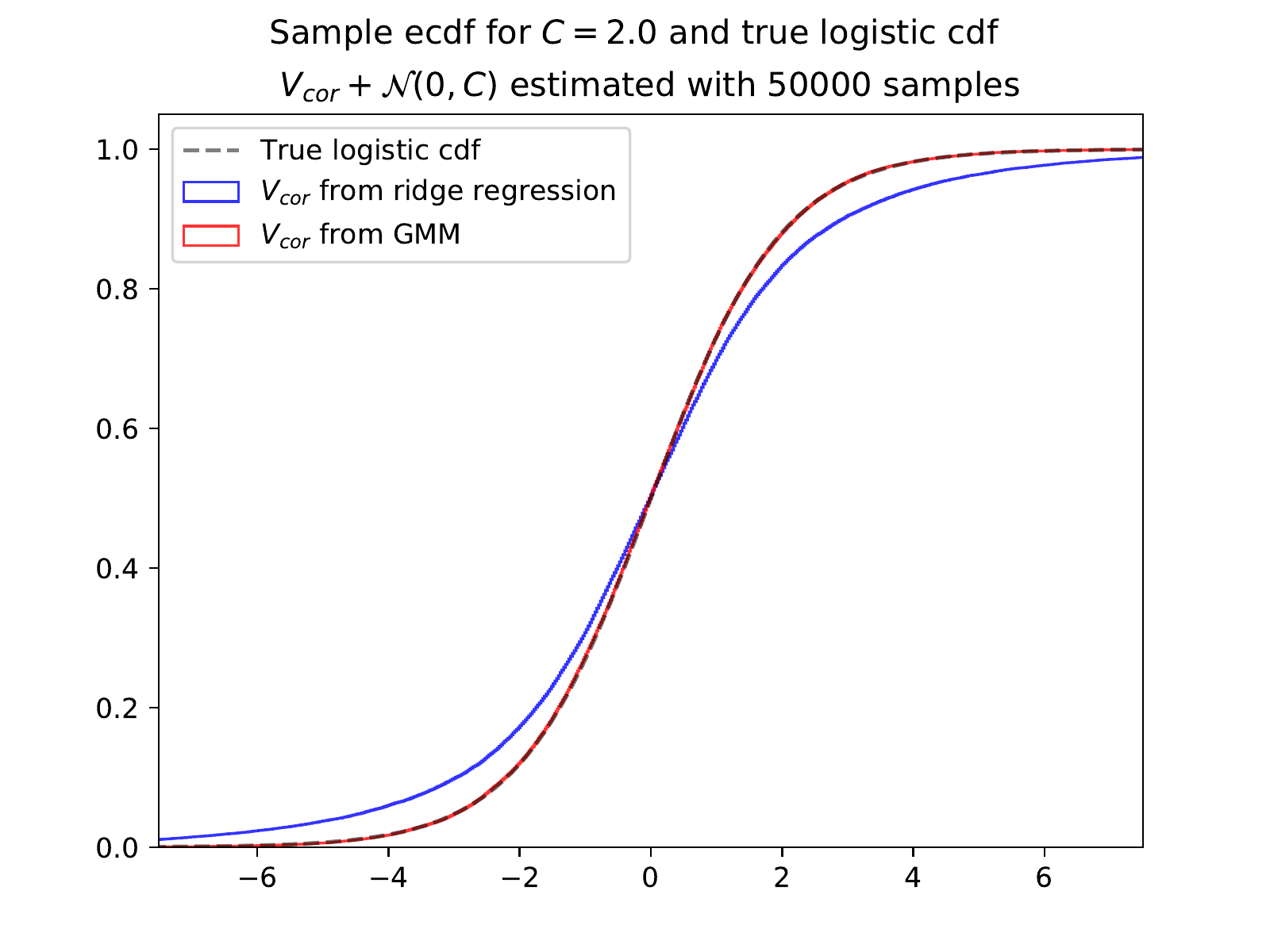}
 }
 \subfigure[Absolute differences from true logistic cdf]
{\label{fig:approximate_cdfs_figure2}
 \includegraphics[width=0.55\textwidth,trim=0mm 0mm 0mm 0mm,clip]{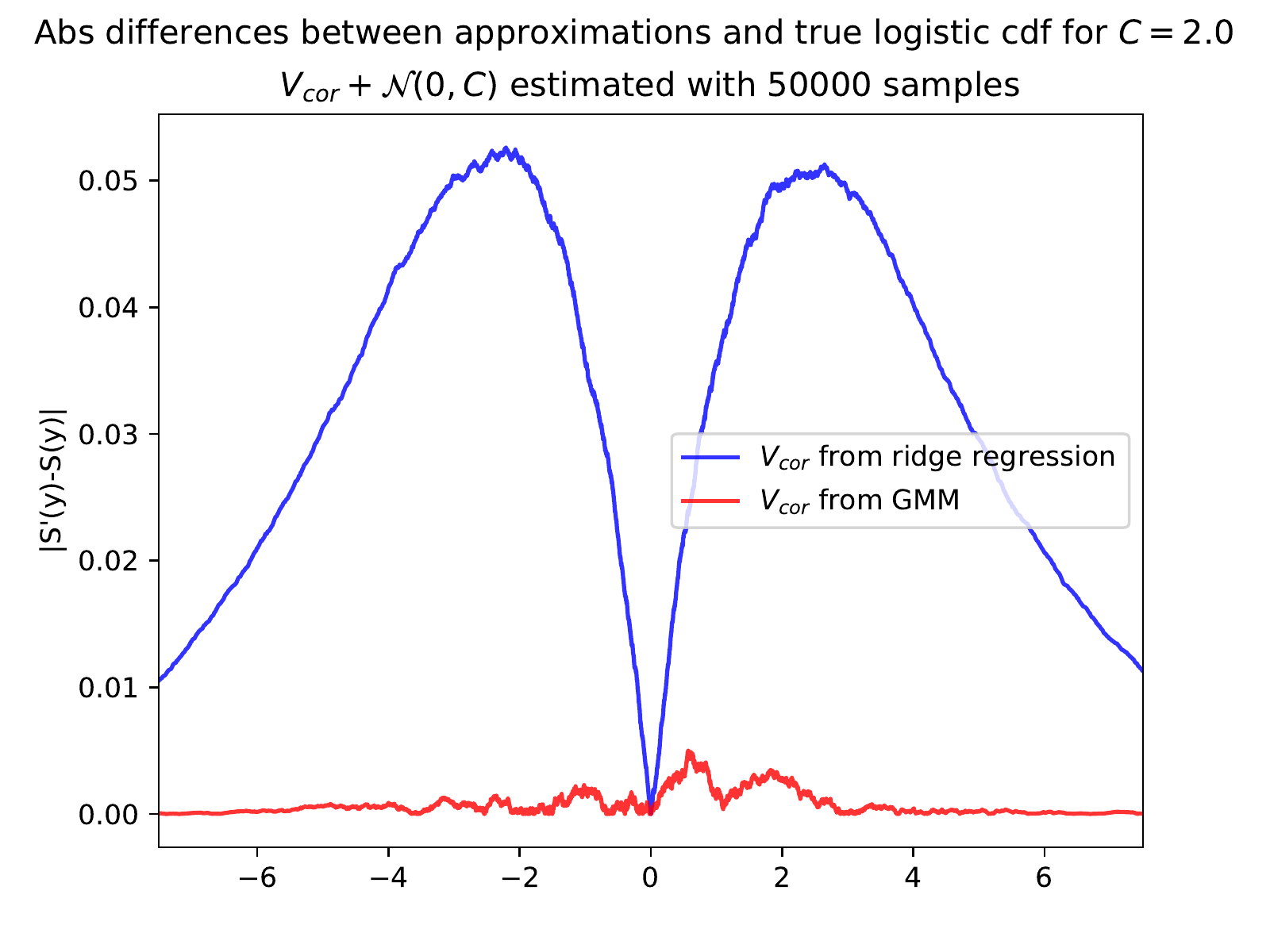} 
 }
  \caption{
  Figure \ref{fig:approximate_cdfs_figure1} shows the empirical cdf for the approximate logistic distributions calculated using  the ridge regression 
  solution of Seita et al. and our GMM together with true logistic cdf. The variance of $V_{cor}$ using ridge regression is 
  too high and the resulting $V_{cor}+\mathcal N(0,C)$ is clearly off. The ecdf for GMM is 
  almost indistinguishable from the true cdf. 
  Figure \ref{fig:approximate_cdfs_figure2} shows the absolute distances between the approximation ecdf and 
  the true logistic cdf.
  }
  \label{fig:approximate_cdfs_figure}
\end{figure*}

To calculate the ridge regression solution for $[-10,10]$, we use the original code of \cite{Seita_2017} with parameter values 
$n=4000, \lambda=10.0$ used in the original paper. The problems with larger $C$ values persisted 
with other parameter settings we tested. Note that the discretisation granularity parameter $n$ used in the 
two methods are not directly comparable.

To fit the GMMs with $K$ components, we take the interval $[-10,10]$ with $n=1000$ points for calculating the loss function, 
and run 20000 optimisation iterations with PyTorch \citep{paszke_2017}. We use Adam optimiser \citep{Kingma_2014} with learning rate $\eta=0.01$ 
and otherwise default settings. The approximation is forced to be symmetric about zero by adding mirrored components: 
for the $k$th component we add a copy but set the mean as $-\mu_k,$ and set the weights as $\pi_k/2$ for both, i.e., use the mean of 
the original and the mirrored component. We use $K=50$ in the test, which gives 100 components with mirroring.

\end{document}